  \definecolor{orange}{HTML}{ff7f0e}
  \definecolor{blue}{HTML}{1f77b4}
\def\@onedot{\ifx\@let@token.\else.\null\fi\xspace}
\DeclareRobustCommand\onedot{\futurelet\@let@token\@onedot}
\definecolor{blue1}{RGB}{0,128,255}
\definecolor{blue3}{RGB}{0,0,128}
\definecolor{darkpastelgreen}{rgb}{0.01, 0.75, 0.24}
\definecolor{cerulean}{rgb}{0.0, 0.48, 0.65}
\newcommand*{\tran}{^{\mkern-1.5mu\mathsf{T}}}
\newcommand{\mbb}[1]{\mathbb{#1}}
\newcommand{\mcal}[1]{\mathcal{#1}}
\newcommand{\mrel}{\mathrel}
\def\eg{\emph{e.g}\onedot}
\def\ie{\emph{i.e}\onedot}
\def\vs{\emph{vs}\onedot}
\def\aka{a.k.a\onedot}
\def\iid{i.i.d\onedot}
\definecolor{darkgreen}{rgb}{0,0.6,0}
\newtheorem{theorem}{Theorem}
\newtheorem{corollary}{Corollary}
\def\eqref#1{equation~\ref{#1}}
\def\1{\bm{1}}
\def\rvu{{\mathbf{i}}}
\def\rvu{{\mathbf{u}}}
\def\rvw{{\mathbf{w}}}
\def\rvx{{\mathbf{x}}}
\def\rvy{{\mathbf{y}}}
\def\rvz{{\mathbf{z}}}
\def\vtheta{{\bm{\theta}}}
\def\vphi{{\bm{\phi}}}
\def\vf{{\bm{f}}}
\def\vh{{\bm{h}}}
\def\vs{{\bm{s}}}
\DeclareMathAlphabet{\mathsfit}{\encodingdefault}{\sfdefault}{m}{sl}
\SetMathAlphabet{\mathsfit}{bold}{\encodingdefault}{\sfdefault}{bx}{n}
\def\sR{{\mathbb{R}}}
\newcommand{\E}{\mathbb{E}}
\newcommand{\KL}{D_{\mathrm{KL}}}
\newcommand{\psde}{p_\vtheta^{\textnormal{SDE}}}
\newcommand{\ptsde}{\tilde{p}_\vtheta^{\textnormal{SDE}}}
\newcommand{\pode}{p_\vtheta^{\textnormal{ODE}}}
\newcommand{\boundsm}{\mcal{L}^{\text{SM}}_\vtheta}
\newcommand{\bounddsm}{\mcal{L}^{\text{DSM}}_\vtheta}
\newcommand{\boundsme}{\mcal{L}^{\text{SM}}_{\vtheta,\epsilon}}
\newcommand{\bounddsme}{\mcal{L}^{\text{DSM}}_{\vtheta, \epsilon}}
\newcommand{\jsm}{\mcal{J}_{\textnormal{SM}}}
\newcommand{\jdsm}{\mcal{J}_{\textnormal{DSM}}}
\newcommand{\ud}{\mathop{}\!\mathrm{d}}
\newcommand{\norm}[1]{\left\lVert#1\right\rVert}
\title{Maximum Likelihood Training of \\ Score-Based Diffusion Models}
\author{%
  Yang Song\thanks{Equal contribution.} \\
  Computer Science Department\\
  Stanford University\\
  \texttt{yangsong@cs.stanford.edu} \\
  \And
  Conor Durkan$^\ast$ \\
  School of Informatics\\
  University of Edinburgh\\
  \texttt{conor.durkan@ed.ac.uk}\\
  \AND
  Iain Murray \\
  School of Informatics \\
  University of Edinburgh \\
  \texttt{i.murray@ed.ac.uk}\\
  \And
  Stefano Ermon \\
  Computer Science Department\\
  Stanford University\\
  \texttt{ermon@cs.stanford.edu}\\
}
\begin{document}
\maketitle

\begin{abstract}

Score-based diffusion models synthesize samples by reversing a stochastic process that diffuses data to noise, and are trained by minimizing a weighted combination of score matching losses. The log-likelihood of score-based diffusion models can be tractably computed through a connection to continuous normalizing flows, but log-likelihood is not directly optimized by the weighted combination of score matching losses. We show that for a specific weighting scheme, the objective upper bounds the negative log-likelihood, thus enabling approximate maximum likelihood training of score-based diffusion models. We empirically observe that maximum likelihood training consistently improves the likelihood of score-based diffusion models across multiple datasets, stochastic processes, and model architectures. Our best models achieve negative log-likelihoods of 2.83 and 3.76 bits/dim on CIFAR-10 and ImageNet $32\times 32$ without any data augmentation, on a par with state-of-the-art autoregressive models on these tasks. %

\end{abstract}

\section{Introduction}
Score-based generative models~\cite{song2019generative,song2020improved,song2020score} and diffusion probabilistic models~\cite{sohl2015deep,ho2020denoising} have recently achieved state-of-the-art sample quality in a number of tasks, including image generation~\cite{song2020score, dhariwal2021diffusion}, audio synthesis~\cite{chen2020wavegrad,kong2020diffwave,popov2021grad}, and shape generation~\cite{ShapeGF}. Both families of models perturb data with a sequence of noise distributions, and generate samples by learning to reverse this path from noise to data. Through stochastic calculus, these approaches can be unified into a single framework~\cite{song2020score} which we refer to as score-based diffusion models in this paper. %

The framework of score-based diffusion models~\cite{song2020score} involves gradually diffusing the data distribution towards a given noise distribution using a stochastic differential equation (SDE), and learning the time reversal of this SDE for sample generation. Crucially, the reverse-time SDE has a closed-form expression which depends solely on a time-dependent gradient field (\aka, score) of the perturbed data distribution. This gradient field can be efficiently estimated by training a neural network (called a score-based model~\cite{song2019generative,song2020improved}) with a weighted combination of score matching losses~\cite{hyvarinen-EstimationNonNormalizedStatistical-2005,vincent2011connection,song2019sliced} as the objective. A key advantage of score-based diffusion models is that they can be transformed into continuous normalizing flows (CNFs)~\cite{chen2018neural,grathwohl-FFJORDFreeformContinuous-2019}, thus allowing tractable likelihood computation with numerical ODE solvers.

Compared to vanilla CNFs, score-based diffusion models are much more efficient to train. This is because the maximum likelihood objective for training CNFs requires running an expensive ODE solver for every optimization step, while the weighted combination of score matching losses for training score-based models does not. However, unlike maximum likelihood training, minimizing a combination of score matching losses does not necessarily lead to better likelihood values.
Since better likelihoods are useful for applications including compression~\cite{HoogeboomIntegerDiscreteFlows,ho2019compression,townsend2018practical}, semi-supervised learning~\cite{dai2017good}, adversarial purification~\cite{song2018pixeldefend}, and comparing against likelihood-based generative models, we seek a training objective for score-based diffusion models that is as efficient as score matching but also promotes higher likelihoods.

We show that such an objective can be readily obtained through slight modification of the weighted combination of score matching losses. Our theory reveals that with a specific choice of weighting, which we term the \emph{likelihood weighting}, the combination of score matching losses actually upper bounds the negative log-likelihood. We further prove that this upper bound becomes tight when our score-based model corresponds to the true time-dependent gradient field of a certain reverse-time SDE\@. Using likelihood weighting increases the variance of our objective, which we counteract by introducing a variance reduction technique based on importance sampling. Our bound is analogous to the classic evidence lower bound used for training latent-variable models in the variational autoencoding framework~\cite{kingma-AutoEncodingVariationalBayes-2014,rezende-StochasticBackpropagationApproximate-2014}, and can be viewed as a continuous-time generalization of \cite{sohl2015deep}.

With our likelihood weighting, we can minimize the weighted combination of score matching losses for approximate maximum likelihood training of score-based diffusion models. Compared to weightings in previous work~\cite{song2020score}, we consistently improve likelihood values across multiple datasets, model architectures, and SDEs, with only slight degradation of Fr\'{e}chet Inception distances~\cite{HeuselRUNH17}. %
Moreover, our upper bound on negative log-likelihood allows training with variational dequantization~\cite{ho2019flow++}, with which we reach negative log-likelihood of \textbf{2.83} bits/dim on CIFAR-10~\cite{krizhevsky2014cifar} and \textbf{3.76} bits/dim on ImageNet $32\!\times\! 32$~\cite{van2016pixel} with no data augmentation. Our models present the first instances of normalizing flows which achieve comparable likelihood to cutting-edge autoregressive models.%

\section{Score-based diffusion models}

Score-based diffusion models are deep generative models that smoothly transform data to noise with a diffusion process, and synthesize samples by learning and simulating the time reversal of this diffusion. The overall idea is illustrated in \cref{fig:schematic}. %
\subsection{Diffusing data to noise with an SDE}\label{sec:sde}
Let $p(\rvx)$ denote the unknown distribution of a dataset consisting of $D$-dimensional i.i.d.\ samples. %
Score-based diffusion models~\cite{song2020score} employ a stochastic differential equation (SDE) to diffuse $p(\rvx)$ towards a noise distribution. The SDEs are of the form
\begin{align}
        \ud \rvx = \vf(\rvx, t) \ud t + g(t) \ud \rvw \label{eq:sde},
\end{align}
where $ \vf(\cdot, t): \mbb{R}^D \to \mbb{R}^D $ is the drift coefficient, $ g(t) \in \mbb{R} $ is the diffusion coefficient, and $ \rvw \in \mbb{R}^D $ denotes a standard Wiener process (\aka, Brownian motion). Intuitively, we can interpret $\ud \rvw$ as infinitesimal Gaussian noise. The solution of an SDE is a diffusion process $\{\rvx(t) \}_{t \in [0,T]}$, where $[0, T]$ is a fixed time horizon. We let $p_t(\rvx)$ denote the marginal distribution of $\rvx(t)$, and $p_{0t}(\rvx' \mid \rvx)$ denote the transition distribution from $\rvx(0)$ to $\rvx(t)$. Note that by definition we always have $p_0 = p$ when using an SDE to perturb the data distribution. 

The role of the SDE is to smooth the data distribution by adding noise, gradually removing structure until little of the original signal remains. In the framework of score-based diffusion models, we choose $\vf(\rvx, t)$, $g(t)$, and $T$ such that the diffusion process $\{\rvx(t)\}_{t\in[0,T]}$ approaches some analytically tractable prior distribution $\pi(\rvx)$ at $t=T$, meaning $p_T(\rvx) \approx \pi(\rvx)$. Three families of SDEs suitable for this task are outlined in \cite{song2020score}, namely Variance Exploding (VE) SDEs, Variance Preserving (VP) SDEs, and subVP SDEs.

\subsection{Generating samples with the reverse SDE}\label{sec:rev-sde}
Sample generation in score-based diffusion models relies on time-reversal of the diffusion process. For well-behaved drift and diffusion coefficients, the forward diffusion described in \cref{eq:sde} has an associated reverse-time diffusion process \cite{anderson-ReverseTimeDiffusionEquation-1982, haussmann1986time} given by the following SDE
\begin{align}
    \ud \rvx = \left[ \vf(\rvx, t) - g(t)^{2} \nabla_{\rvx} \log p_{t}(\rvx) \right] \ud t + g(t) \ud \bar{\rvw}, \label{eq:reverse-sde}
\end{align}
where $ \bar{\rvw} $ is now a standard Wiener process in the reverse-time direction. Here $\ud t$ represents an infinitesimal negative time step, meaning that the above SDE must be solved from $t=T$ to $t=0$. This reverse-time SDE results in exactly the same diffusion process $\{\rvx(t)\}_{t\in[0,T]}$ as \cref{eq:sde}, assuming it is initialized with $\rvx(T) \sim p_T(\rvx)$. %
This result allows for the construction of diffusion-based generative models, and its functional form reveals the key target for learning: the time-dependent score function $\nabla_{\rvx} \log p_{t}(\rvx) $. Again, see \cref{fig:schematic} for a helpful visualization of this two-part formulation.

\begin{figure}
    \centering
    \includegraphics[width=\linewidth]{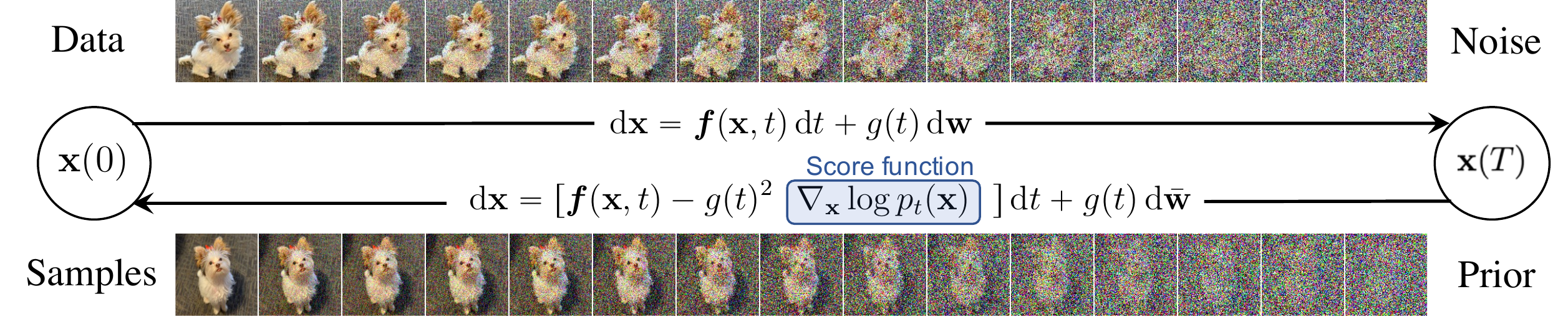}
    \caption{We can use an SDE to diffuse data to a simple noise distribution. This SDE can be reversed once we know the score of the marginal distribution at each intermediate time step, $\nabla_\rvx \log p_t(\rvx)$.} 
    \label{fig:schematic}
\end{figure}

In order to estimate $\nabla_\rvx \log p_t(\rvx)$ from a given dataset, we fit the parameters of a neural network $\vs_\vtheta(\rvx, t)$, termed a \emph{score-based model}, such that $\vs_\vtheta(\rvx, t) \approx \nabla_\rvx \log p_t(\rvx)$ for almost all $\rvx \in \mbb{R}^D$ and $t \in [0,T]$. Unlike many likelihood-based generative models, a score-based model does not need to satisfy the integral constraints of a density function, and is therefore much easier to parameterize. Good score-based models should keep the following least squares loss small
\begin{align}
    \mathcal{J}_{\text{SM}}(\vtheta; \lambda(\cdot)) \coloneqq \frac{1}{2} \int_{0}^{T} \mbb{E}_{p_{t}(\rvx)}[\lambda(t) \norm{\nabla_\rvx \log p_{t}(\rvx) - \vs_\vtheta (\rvx, t)}_2^2] \ud t, \label{eqn:objective}
\end{align}
where $\lambda\colon [0,T] \to \sR_{>0} $ is a positive weighting function. The integrand features the well-known score matching~\cite{hyvarinen-EstimationNonNormalizedStatistical-2005} objective $\E_{p_t(\rvx)}[\norm{\nabla_\rvx \log p_t(\rvx) - \vs_\vtheta(\rvx, t)}_2^2]$. We therefore refer to \cref{eqn:objective} as a weighted combination of score matching losses.

With score matching techniques~\cite{vincent2011connection,song2019sliced}, we can compute \cref{eqn:objective} up to an additive constant and minimize it for training score-based models. For example, we can use denoising score matching~\cite{vincent2011connection} to transform $\mcal{J}_{\text{SM}}(\vtheta; \lambda(\cdot))$ into the following, which is equivalent up to a constant independent of $ \vtheta $:
\begin{align}
    \mathcal{J}_{\text{DSM}}(\vtheta; \lambda(\cdot)) \coloneqq \frac{1}{2} \int_{0}^{T} \mbb{E}_{p(\rvx)p_{0t}(\rvx^\prime \mid \rvx)}[\lambda(t) \norm{\nabla_{\rvx^{\prime}} \log p_{0t}(\rvx^{\prime} \mid \rvx) - \vs_\vtheta (\rvx^{\prime}, t)}_2^2] \ud t. \label{eqn:denoising-objective}
\end{align}
Whenever the drift coefficient $\vf_\vtheta(\rvx, t)$ is linear in $\rvx$ (which is true for all SDEs in \cite{song2020score}), the transition density $p_{0t}(\rvx^\prime \mid \rvx)$ is a tractable Gaussian distribution. We can form a Monte Carlo estimate of both the time integral and expectation in $\mcal{J}_{\text{DSM}}(\vtheta; \lambda(\cdot))$ with a sample $(t, \rvx, \rvx')$, where $t$ is uniformly drawn from $[0, T]$, $\rvx \sim p(\rvx)$ is a sample from the dataset, and $\rvx' \sim p_{0t}(\rvx'\mid \rvx)$. The gradient $\nabla_{\rvx'}\log p_{0t}(\rvx' \mid \rvx)$ can also be computed in closed form since $p_{0t}(\rvx'\mid\rvx)$ is Gaussian.

After training a score-based model $\vs_\vtheta(\rvx, t)$ with $\jdsm(\vtheta; \lambda(\cdot))$, we can plug it into the reverse-time SDE in \cref{eq:reverse-sde}. Samples are then generated by solving this reverse-time SDE with numerical SDE solvers, given an initial sample from $\pi(\rvx)$ at $t=T$. Since the forward SDE \cref{eq:sde} is designed such that $p_T(\rvx) \approx \pi(\rvx)$, the reverse-time SDE will closely trace the diffusion process given by \cref{eq:sde} in the reverse time direction, and yield an approximate data sample at $t= 0$ (as visualized in \cref{fig:schematic}).

\section{Likelihood of score-based diffusion models}\label{sec:likelihood}
The forward and backward diffusion processes in score-based diffusion models induce two probabilistic models for which we can define a likelihood. The first probabilistic model, denoted as $\psde(\rvx)$, is given by the approximate reverse-time SDE constructed from our score-based model $\vs_\vtheta(\rvx, t)$.
In particular, suppose $\{\hat{\rvx}_\vtheta(t)\}_{t\in[0,T]}$ is a stochastic process given by
\begin{align}
    \ud \hat{\rvx} = \left[\vf(\hat{\rvx}, t) - g(t)^2 \vs_\vtheta(\hat{\rvx}, t) \right] \ud t + g(t) \ud \bar{\rvw}, \quad \hat{\rvx}_\vtheta(T) \sim \pi. \label{eqn:reverse_sde_model}
\end{align}
We define $p_\vtheta^{\text{SDE}}$ as the marginal distribution of $\hat{\rvx}_\vtheta(0)$. The probabilistic model $\psde$ is jointly defined by the score-based model $\vs_\vtheta(\rvx, t)$, the prior $\pi$, plus the drift and diffusion coefficients of the forward SDE in \cref{eq:sde}. We can obtain a sample $\hat{\rvx}_\vtheta(0) \sim \psde$ by numerically solving the reverse-time SDE in \cref{eqn:reverse_sde_model} with an initial noise vector $\hat{\rvx}_\vtheta(T) \sim \pi$.

The other probabilistic model, denoted $\pode(\rvx)$, is derived from the SDE's associated \emph{probability flow ODE}~\cite{maoutsa2020interacting,song2020score}. Every SDE has a corresponding probability flow ODE whose marginal distribution at each time $ t $ matches that of the SDE, so that they share the same $p_t(\rvx)$ for all time. In particular, the ODE corresponding to the SDE in \cref{eq:sde} is given by
\begin{align}
    \frac{\ud \rvx}{\ud t} = \vf(\rvx, t) - \frac{1}{2} g(t)^{2} \nabla_{\rvx} \log p_{t}(\rvx). \label{eqn:ode}
\end{align}
Unlike the SDEs in \cref{eq:sde} and \cref{eq:reverse-sde}, this ODE describes fully deterministic dynamics for the process. Notably, it still features the same time-dependent score function $ \nabla_{\rvx} \log p_{t}(\rvx) $. By approximating this score function with our model $\vs_\vtheta(\rvx, t)$, the probability flow ODE becomes
\begin{align}
    \frac{\ud \tilde{\rvx}}{\ud t} = \vf(\tilde{\rvx}, t) - \frac{1}{2}g(t)^2 \vs_\vtheta(\tilde{\rvx}, t).\label{eqn:approx_ode}
\end{align}
In fact, this ODE is an instance of a continuous normalizing flow (CNF)~\cite{grathwohl-FFJORDFreeformContinuous-2019}, and we can quantify how the ODE dynamics transform volumes across time in exactly the same way as these traditional flow-based models~\cite{chen2018neural}. Given a prior distribution $\pi(\rvx)$, and a trajectory function $\tilde{\rvx}_\vtheta\colon [0,T] \to \mbb{R}^D$ satisfying the ODE in \cref{eqn:approx_ode}, we define $p_\vtheta^{\text{ODE}}$ as the marginal distribution of $\tilde{\rvx}_\vtheta(0)$ when $\tilde{\rvx}_\vtheta(T) \sim \pi$. Similarly to $\psde$, the model $\pode$ is jointly defined by the score-based model $\vs_\vtheta(\rvx, t)$, the prior $\pi$, and the forward SDE in \cref{eq:sde}. Leveraging the instantaneous change-of-variables formula~\cite{chen2018neural}, we can evaluate $\log \pode(\rvx)$ exactly with numerical ODE solvers. Since $\pode$ is a CNF, we can generate a sample $\tilde{\rvx}_\vtheta(0) \sim \pode$ by numerically solving the ODE in \cref{eqn:approx_ode} with an initial value $\tilde{\rvx}_\vtheta(T) \sim \pi$.

Although computing $\log \pode(\rvx)$ is tractable, training $\pode$ with maximum likelihood will require calling an ODE solver for every optimization step~\cite{chen2018neural,grathwohl-FFJORDFreeformContinuous-2019}, which can be prohibitively expensive for large-scale score-based models. Unlike $\pode$, we cannot evaluate $\log \psde(\rvx)$ exactly for an arbitrary data point $\rvx$. However, we have a lower bound on $\log \psde(\rvx)$ which allows both efficient evaluation and optimization, as will be shown in \cref{sec:elbo}.
\section{Bounding the likelihood of score-based diffusion models}\label{sec:bound}

Many applications benefit from models which achieve high likelihood. One example is lossless compression, where log-likelihood directly corresponds to the minimum expected number of bits needed to encode a message. Popular likelihood-based models such as variational autoencoders and normalizing flows have already found success in image compression~\cite{townsend2018practical,ho2019compression,HoogeboomIntegerDiscreteFlows}. Despite some known drawbacks~\cite{TheisOB15}, likelihood is still one of the most popular metrics for evaluating and comparing generative models.

Maximizing the likelihood of score-based diffusion models can be accomplished by either maximizing the likelihood of $\psde$ or $\pode$. Although $\pode$ is a continuous normalizing flow (CNF) and its log-likelihood is tractable, training with maximum likelihood is expensive. As mentioned already, it requires solving an ODE at every optimization step in order to evaluate the log-likelihood on a batch of training data. In contrast, training with the weighted combination of score matching losses is much more efficient, yet in general it does not directly promote high likelihood of either $\psde$ or $\pode$.

In what follows, we show that with a specific choice of the weighting function $\lambda(t)$, the combination of score matching losses $\jsm(\vtheta; \lambda(\cdot))$ actually becomes an upper bound on $\KL(p \mathrel\| \psde)$, and can therefore serve as an efficient proxy for maximum likelihood training. In addition, we provide a related lower bound on $\log \psde(\rvx)$ that can be evaluated efficiently on any individual datapoint $\rvx$.

\subsection{Bounding the KL divergence with likelihood weighting}\label{sec:kl}
It is well-known that maximizing the log-likelihood of a probabilistic model is equivalent to minimizing the KL divergence from the data distribution to the model distribution. We show in the following theorem that for the model $\psde$, this KL divergence can be upper bounded by $\jsm(\vtheta; \lambda(\cdot))$ when using the weighting function $\lambda(t) = g(t)^2$, where $g(t)$ is the diffusion coefficient of SDE in \cref{eq:sde}.

\begin{restatable}{theorem}{bound}\label{thm:bound}
Let $p(\rvx)$ be the data distribution, $\pi(\rvx)$ be a known prior distribution, and $\psde$ be defined as in \cref{sec:likelihood}. Suppose $\{\rvx(t)\}_{t\in[0,T]}$ is a stochastic process defined by the SDE in \cref{eq:sde} with $\rvx(0) \sim p$, where the marginal distribution of $\rvx(t)$ is denoted as $p_t$. Under some regularity conditions detailed in \cref{app:proof}, we have
\begin{align}
    \KL(p\mathrel\| \psde) \leq \jsm(\vtheta; g(\cdot)^2) + \KL(p_T\mathrel\| \pi).\label{eqn:bound}
\end{align}
\end{restatable}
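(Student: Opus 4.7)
The plan is to lift the one-dimensional KL divergence $\KL(p \mathrel\| \psde)$ to a KL divergence between path measures on $C([0,T]; \sR^D)$ and then compute the latter using Girsanov's theorem. Concretely, let $P$ denote the path measure of the forward SDE \cref{eq:sde} initialized at $\rvx(0) \sim p$; by the standard reverse-time SDE construction \cite{anderson-ReverseTimeDiffusionEquation-1982}, $P$ coincides with the path measure generated by running \cref{eq:reverse-sde} backward from $\rvx(T) \sim p_T$. Let $Q$ denote the path measure generated by running the model's reverse-time SDE \cref{eqn:reverse_sde_model} from $\hat\rvx(T) \sim \pi$. By construction, the time-$0$ marginals of $P$ and $Q$ are $p$ and $\psde$ respectively.

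The first step is to apply the data processing inequality: since $p$ and $\psde$ are obtained from $P$ and $Q$ by the same measurable projection $\rvx(\cdot) \mapsto \rvx(0)$, we get
\begin{align*}
    \KL(p \mathrel\| \psde) \le \KL(P \mathrel\| Q).
\end{align*}
The second step is to decompose $\KL(P\mathrel\|Q)$ via the chain rule, conditioning on the terminal value $\rvx(T)$ of each process:
\begin{align*}
    \KL(P \mathrel\| Q) = \KL(p_T \mathrel\| \pi) + \mbb{E}_{\rvx(T)\sim p_T}\!\left[\KL\!\left(P(\,\cdot\mid \rvx(T))\,\middle\|\,Q(\,\cdot\mid\rvx(T))\right)\right].
\end{align*}
Both conditional path measures describe reverse-time diffusions with a common diffusion coefficient $g(t)$ and with drifts $\vf(\rvx,t) - g(t)^2 \nabla_\rvx\log p_t(\rvx)$ and $\vf(\rvx,t) - g(t)^2 \vs_\vtheta(\rvx,t)$ respectively, started from the same point $\rvx(T)$.

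The third step is to invoke Girsanov's theorem to evaluate the conditional KL\@. The drift difference is $g(t)^2\bigl(\nabla_\rvx\log p_t(\rvx) - \vs_\vtheta(\rvx,t)\bigr)$, so the Girsanov density gives
\begin{align*}
    \KL\!\left(P(\,\cdot\mid \rvx(T))\,\middle\|\,Q(\,\cdot\mid\rvx(T))\right) = \frac{1}{2}\int_0^T \mbb{E}\!\left[g(t)^{-2}\,\bigl\|g(t)^2\bigl(\nabla_\rvx\log p_t(\rvx(t)) - \vs_\vtheta(\rvx(t),t)\bigr)\bigr\|_2^2\,\middle|\,\rvx(T)\right]\ud t.
\end{align*}
Taking the outer expectation with respect to $\rvx(T) \sim p_T$ and using the tower property together with the fact that the marginal of $P$ at time $t$ is $p_t$, the right-hand side collapses to $\tfrac{1}{2}\int_0^T g(t)^2\,\mbb{E}_{p_t(\rvx)}\!\left[\bigl\|\nabla_\rvx\log p_t(\rvx) - \vs_\vtheta(\rvx,t)\bigr\|_2^2\right]\ud t = \jsm(\vtheta; g(\cdot)^2)$. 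Combining with the chain-rule identity and the data processing inequality yields the claimed bound.

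The main obstacle is the rigorous application of Girsanov's theorem, which requires verifying that a Novikov-type integrability condition holds for the drift difference $g(t)^2(\nabla_\rvx\log p_t - \vs_\vtheta)$ along the reverse process, and that both path measures are mutually absolutely continuous. These are precisely the ``regularity conditions'' the theorem statement defers to the appendix, and I would discharge them by imposing standard smoothness and growth bounds on $\vf$, $g$, $\vs_\vtheta$, and $\nabla_\rvx\log p_t$ (e.g., finite second moments under $p_t$ and polynomial growth of the score) so that the Radon--Nikodym derivative is a true martingale. The remaining steps---the data processing inequality, the chain rule for path measures, and the algebraic identification of the resulting time integral with $\jsm(\vtheta; g(\cdot)^2)$---are then routine.
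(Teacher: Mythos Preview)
Your proposal is correct and follows essentially the same argument as the paper's proof: lift to path measures, apply the data processing inequality, use the chain rule for KL divergence conditioning on the terminal value, and then compute the conditional path-space KL via Girsanov's theorem to obtain $\jsm(\vtheta; g(\cdot)^2)$. The regularity conditions you anticipate (linear growth and Lipschitz conditions on $\vf$, $g$, $\vs_\vtheta$, $\nabla_\rvx\log p_t$, plus Novikov's condition) are exactly those the paper imposes in its appendix.
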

\begin{proof}[Sketch of proof]
Let $\bm{\mu}$ and $\bm{\nu}$ denote the path measures of SDEs in \cref{eq:sde} and \cref{eqn:reverse_sde_model} respectively. Intuitively, $\bm{\mu}$ is the joint distribution of the diffusion process $\{\rvx(t)\}_{t\in[0,T]}$ given in \cref{sec:sde}, and $\bm{\nu}$ represents the joint distribution of the process $\{\hat{\rvx}_\vtheta(t)\}_{t\in[0,T]}$ defined in \cref{sec:likelihood}. Since we can marginalize $\bm{\mu}$ and $\bm{\nu}$ to obtain distributions $p$ and $\psde$, the data processing inequality gives $\KL(p\mathrel\| \psde) \leq \KL(\bm{\mu}\mathrel\|\bm{\nu})$. From the chain rule for the KL divergence, we also have $\KL(\bm{\mu}\mathrel\| \bm{\nu}) = \KL(p_T\mathrel\| \pi) + \E_{p_T(\rvz)}[\KL(\bm{\mu}(\cdot \mid \rvx(T)=\rvz)\mathrel\| \bm{\nu}(\cdot \mid \hat{\rvx}_\vtheta(T)=\rvz) )]$, where the KL divergence in the final term can be computed by applying the Girsanov theorem~\cite{oksendal2013stochastic} to \cref{eqn:reverse_sde_model} and the reverse-time SDE of \cref{eq:sde}.
\end{proof}

When the prior distribution $\pi$ is fixed, \cref{thm:bound} guarantees that optimizing the weighted combination of score matching losses $\jsm(\vtheta; g(\cdot)^2)$ is equivalent to minimizing an upper bound on the KL divergence from the data distribution $p$ to the model distribution $\psde$. Due to well-known equivalence between minimizing KL divergence and maximizing likelihood, we have the following corollary.
\begin{corollary}\label{cor:likelihood-bound}
Consider the same conditions and notations in \cref{thm:bound}. When $\pi$ is a fixed prior distribution that does not depend on $\vtheta$, we have
\begin{align*}
    -\E_{p(\rvx)}[\log \psde(\rvx)] \leq \jsm(\vtheta; g(\cdot)^2) + C_1 = \jdsm(\vtheta; g(\cdot)^2) + C_2,
\end{align*}
where $C_1$ and $C_2$ are constants independent of $\vtheta$.
\end{corollary}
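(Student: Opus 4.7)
The plan is to derive the corollary as a direct consequence of \cref{thm:bound} combined with the standard denoising score matching identity. Everything is essentially an algebraic rearrangement, so I do not expect any serious obstacle; the only care required is tracking which quantities depend on $\vtheta$ and which do not.

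First, I would expand the KL divergence on the left-hand side of the bound in \cref{thm:bound} as
\begin{equation*}
    \KL(p\mathrel\|\psde) = \E_{p(\rvx)}[\log p(\rvx)] - \E_{p(\rvx)}[\log \psde(\rvx)] = -H(p) - \E_{p(\rvx)}[\log \psde(\rvx)],
\end{equation*}
where $H(p)$ denotes the differential entropy of the data distribution. Rearranging yields
\begin{equation*}
    -\E_{p(\rvx)}[\log \psde(\rvx)] = \KL(p\mathrel\|\psde) + H(p) \leq \jsm(\vtheta; g(\cdot)^2) + \KL(p_T\mathrel\|\pi) + H(p),
\end{equation*}
where the inequality is simply \cref{thm:bound}. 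Since $p$ is fixed, $H(p)$ is $\vtheta$-independent; since $\pi$ is a fixed prior by hypothesis and $p_T$ depends only on $p$ and on the forward SDE coefficients $\vf$ and $g$ (none of which depend on $\vtheta$), the term $\KL(p_T\mathrel\|\pi)$ is also $\vtheta$-independent. Collecting these into $C_1 \coloneqq \KL(p_T\mathrel\|\pi) + H(p)$ establishes the first inequality.

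For the second equality, I would invoke the standard denoising score matching identity of Vincent~\cite{vincent2011connection}, already used in the paper to pass from \cref{eqn:objective} to \cref{eqn:denoising-objective}: for each $t$, the integrand of $\jsm$ differs from the integrand of $\jdsm$ by a term depending only on $p_t$ and $p_{0t}(\rvx'\mid\rvx)$, not on $\vtheta$. Integrating this pointwise identity in $t$ against the weight $g(t)^2$ gives
\begin{equation*}
    \jsm(\vtheta; g(\cdot)^2) = \jdsm(\vtheta; g(\cdot)^2) + C_0,
\end{equation*}
for some constant $C_0$ independent of $\vtheta$ (finite under the same regularity assumptions of \cref{thm:bound}). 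Setting $C_2 \coloneqq C_1 + C_0$ then gives the second equality in the corollary.

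The hardest step, if any, is merely verifying that the denoising score matching equivalence holds in the time-integrated, continuous setting with the specific weighting $\lambda(t) = g(t)^2$; but this is immediate once the pointwise Vincent identity is applied inside the time integral, using Fubini to justify swapping the integral over $t$ with the expectation. No new stochastic calculus is required beyond what already appears in \cref{thm:bound}.
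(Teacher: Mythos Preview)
Your proposal is correct and mirrors exactly what the paper does: the paper offers no separate proof of the corollary but simply remarks that it follows from \cref{thm:bound} via the ``well-known equivalence between minimizing KL divergence and maximizing likelihood,'' together with the denoising score matching identity already invoked when passing from \cref{eqn:objective} to \cref{eqn:denoising-objective}. Your write-up just makes these two steps explicit, identifying $C_1 = \KL(p_T\mathrel\|\pi) + H(p)$ and $C_2 = C_1 + C_0$.
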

In light of the result in \cref{cor:likelihood-bound}, we henceforth term $\lambda(t) \!=\! g(t)^2$ the \emph{likelihood weighting}. The original weighting functions in \cite{song2020score} are inspired from earlier work such as \cite{song2019generative,song2020improved} and \cite{ho2020denoising}, which are motivated by balancing different score matching losses in the combination, and justified by empirical performance. In contrast, likelihood weighting is motivated from maximizing the likelihood of a probabilistic model induced by the diffusion process, and derived by theoretical analysis. There are three types of SDEs considered in \cite{song2020score}: the Variance Exploding (VE) SDE, the Variance Preserving (VP) SDE, and the subVP SDE\@. In \cref{tab:sde}, we summarize all these SDEs and contrast their original weighting functions with our likelihood weighting. For VE SDE, our likelihood weighting incidentally coincides with the original weighting used in \cite{song2020score}, whereas for VP and subVP SDEs they differ from one another.
\begin{table}
\caption{SDEs and their corresponding weightings for score matching losses.}\label{tab:sde}
\begin{center}
\begin{adjustbox}{max width=\linewidth}
\begin{tabular}{c|c|c|c}
\Xhline{2\arrayrulewidth} \bigstrut
SDE & Formula & $\lambda(t)$ in \cite{song2020score} & likelihood weighting \\
\Xhline{\arrayrulewidth}\bigstrut[t]
    VE & $\ud \rvx = \sigma(t) \ud \rvw$ & $\sigma^2(t)$ & $\sigma^2(t)$\\
    VP & $\ud \rvx = -\frac{1}{2}\beta(t) \rvx \ud t + \sqrt{\beta(t)} \ud \rvw$ & $1 - e^{-\int_{0}^t \beta(s)\ud s}$ & $\beta(t)$\\
    subVP & $\ud \rvx = -\frac{1}{2}\beta(t) \rvx \ud t + \sqrt{\beta(t) (1-e^{-2\int_0^t \beta(s)\ud s})}\ud \rvw$ & $(1 - e^{-\int_{0}^t \beta(s)\ud s})^2$ & $ \beta(t) (1-e^{-2\int_0^t \beta(s)\ud s})$\bigstrut[b]\\
\Xhline{2\arrayrulewidth}
\end{tabular}
\end{adjustbox}
\end{center}
\end{table}

\cref{thm:bound} leaves two questions unanswered. First, what are the conditions for the bound to be tight (become an equality)? Second, is there any connection between $\psde$ and $\pode$ under some conditions? We provide both answers in the following theorem.
\begin{restatable}{theorem}{thm}\label{thm:1}
Suppose $p(\rvx)$ and $q(\rvx)$ have continuous second-order derivatives and finite second moments. Let $\{\rvx(t)\}_{t\in[0,T]}$ be the diffusion process defined by the SDE in \cref{eq:sde}. We use $ p_{t} $ and $ q_{t} $ to denote the distributions of $\rvx(t)$ when $\rvx(0) \sim p$ and $\rvx(0) \sim q$, and assume they satisfy the same assumptions in \cref{app:proof}. Under the conditions $q_T = \pi$ and $\vs_\vtheta(\rvx, t) \equiv \nabla_\rvx \log q_t(\rvx)$ for all $t\in [0,T]$, we have the following equivalence in distributions
\begin{align}
    \psde = \pode = q.
\end{align}
Moreover, we have
\begin{align}
    \KL(p\mathrel\| \psde) =  \jsm(\vtheta; g(\cdot)^2) + \KL(p_T\mathrel\| \pi). \label{eqn:kl}
\end{align}
\end{restatable}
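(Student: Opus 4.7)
The plan is to first establish the distributional equivalence $\psde = \pode = q$ by showing that the hypothesis $\vs_\vtheta(\rvx,t) \equiv \nabla_\rvx \log q_t(\rvx)$ turns the learned reverse SDE and the learned probability flow ODE into the canonical time reversal and probability flow of the forward SDE initialized at $q$. I would then derive the KL equality via a de Bruijn--type identity for the evolution of relative entropy under a common Fokker--Planck equation.

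For the first part, I would substitute $\vs_\vtheta(\hat{\rvx},t) = \nabla_{\hat{\rvx}} \log q_t(\hat{\rvx})$ into \cref{eqn:reverse_sde_model}; this is exactly the Anderson reverse-time SDE associated with the forward SDE \cref{eq:sde} initialized at $q(\rvx)$. Combined with the matching terminal condition $\hat{\rvx}_\vtheta(T) \sim \pi = q_T$ and the regularity conditions of \cref{app:proof} which grant strong uniqueness, this forces the marginal of $\hat{\rvx}_\vtheta(t)$ to coincide with $q_t$ for every $t$, and in particular $\psde = q_0 = q$. The same substitution reduces \cref{eqn:approx_ode} to the probability flow ODE associated with the forward SDE starting from $q$; since the probability flow ODE preserves the marginals of the SDE at every time and is initialized at $\tilde{\rvx}_\vtheta(T) \sim \pi = q_T$, uniqueness of the Fokker--Planck equation under the same regularity assumptions gives $\pode = q_0 = q$.

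For the equality \cref{eqn:kl}, with $\psde = q$ it suffices to show $\KL(p\mathrel\| q) = \jsm(\vtheta; g(\cdot)^2) + \KL(p_T \mathrel\| \pi)$. The central lemma is the de Bruijn--type identity
\begin{equation*}
\frac{d}{dt}\KL(p_t \mathrel\| q_t) \;=\; -\frac{g(t)^2}{2}\, \E_{p_t(\rvx)}\!\left[\left\lVert \nabla_\rvx \log p_t(\rvx) - \nabla_\rvx \log q_t(\rvx)\right\rVert_2^2\right],
\end{equation*}
which holds because $p_t$ and $q_t$ satisfy the same Fokker--Planck PDE (with common drift $\vf$ and diffusion $g$) driven by \cref{eq:sde}. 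I would prove it by differentiating under the integral sign, substituting the Fokker--Planck equation for $\partial_t p_t$ and $\partial_t q_t$, and performing integration by parts twice, with the second-moment and smoothness assumptions used to kill the boundary terms at infinity. Integrating from $0$ to $T$, using $q_T = \pi$, and recognizing the right-hand side as $-\jsm(\vtheta; g(\cdot)^2)$ under the hypothesis on $\vs_\vtheta$ yields \cref{eqn:kl}.

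The principal obstacle is the rigorous justification of the de Bruijn identity: justifying the exchange of derivative and integral, controlling the boundary terms that appear after integration by parts, and confirming that $\log(p_t/q_t)$ and its gradient are sufficiently integrable against $p_t$. These are guaranteed by the continuous second-order derivatives and finite second moments assumed, together with the regularity conditions imported from \cref{app:proof}. A secondary but routine check is verifying that the Anderson reversal and probability-flow-ODE marginal-preservation results apply under these same assumptions; both are standard given the drift/diffusion regularity already required for \cref{thm:bound}.
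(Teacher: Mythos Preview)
Your proposal is correct and follows essentially the same route as the paper: the distributional equivalences are obtained by substituting $\vs_\vtheta \equiv \nabla_\rvx \log q_t$ into the learned reverse SDE and probability flow ODE and invoking Anderson's time reversal and the probability-flow marginal-preservation property, while the KL equality is derived by differentiating $\KL(p_t\mathrel\|q_t)$ in $t$, substituting the common Fokker--Planck equation for $p_t$ and $q_t$, and integrating by parts to obtain the relative Fisher information form before integrating over $[0,T]$. The only cosmetic difference is that the paper writes the argument as $\KL(p\mathrel\|q) = \KL(p_T\mathrel\|q_T) + \int_T^0 \partial_t \KL(p_t\mathrel\|q_t)\,\ud t$ and appeals explicitly to the decay assumption (xii) for the boundary terms, whereas you phrase the same computation as a de Bruijn--type identity; the content is identical.
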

\begin{proof}[Sketch of proof]
When $\vs_\vtheta(\rvx, t)$ matches $\nabla_\rvx \log q_t(\rvx)$, they both represent the time-dependent score of the same stochastic process so we immediately have $\psde = q$. According to the theory of probability flow ODEs, we also have $\pode = q = \psde$. To prove \cref{eqn:kl}, we note that $\KL(p\mathrel\| \psde) = \KL(p \| q) = \KL(p_T\|q_T) - \int_0^T \frac{\ud}{\ud t}\KL(p_t\mathrel\| q_t) \ud t = \KL(p_T\|\pi) - \int_0^T \frac{\ud}{\ud t}\KL(p_t\mathrel\| q_t) \ud t$. We can now complete the proof by simplifying the integrand using the Fokker--Planck equation of $p_t$ and $q_t$ followed by integration by parts.
\end{proof}

In practice, the conditions of \cref{thm:1} are hard to satisfy since our score-based model $\vs_\vtheta(\rvx, t)$ will not exactly match the score function $\nabla_\rvx \log q_t(\rvx)$ of some reverse-time diffusion process with the initial distribution $q_T = \pi$. In other words, our score model may not be a valid time-dependent score function of a stochastic process with an appropriate initial distribution.
Therefore, although score matching with likelihood weighting performs approximate maximum likelihood training for $\psde$, we emphasize that it is not theoretically guaranteed to make the likelihood of $\pode$ better. That said, $\pode$ will closely match $\psde$ if our score-based model well-approximates the true score such that $\vs_\vtheta(\rvx, t) \approx \nabla_\rvx \log p_t(\rvx)$ for all $\rvx$ and $t \in [0,T]$. Moreover, we empirically observe in our experiments (see \cref{tab:results}) that training with the likelihood weighting is actually able to consistently improve the likelihood of $\pode$ across multiple datasets, SDEs, and model architectures.

\subsection{Bounding the log-likelihood on individual datapoints}\label{sec:elbo}

The bound in \cref{thm:bound} is for the entire distributions of $p$ and $\psde$, but we often seek to bound the log-likelihood for an individual data point $\rvx$. In addition, $\jsm(\vtheta; \lambda(\cdot))$ in the bound is not directly computable due to the unknown quantity $\nabla_\rvx \log p_t(\rvx)$, and can only be evaluated up to an additive constant through $\jdsm(\vtheta;\lambda(\cdot))$ (as we already discussed in \cref{sec:rev-sde}). Therefore, the bound in \cref{thm:bound} is only suitable for training purposes. To address these issues, we provide the following bounds for individual data points.

\begin{restatable}{theorem}{pointelbo}\label{thm:elbo_bound2}
Let $p_{0t}(\rvx' \mid \rvx)$ denote the transition distribution from $p_0(\rvx)$ to $p_t(\rvx)$ for the SDE in \cref{eq:sde}. With the same notations and conditions in \cref{thm:bound}, we have
\begin{align}
-\log \psde(\rvx) \leq \boundsm(\rvx) = \bounddsm(\rvx),
\end{align}
where $\boundsm(\rvx)$ is defined as
\begin{align*}
\resizebox{\linewidth}{!}{$\displaystyle 
-\mbb{E}_{p_{0T}(\rvx' \mid \rvx)}[\log \pi(\rvx')] 
+ \frac{1}{2}\int_{0}^T \mbb{E}_{p_{0t}(\rvx' \mid \rvx)}\left[2g(t)^2 \nabla_{\rvx'} \cdot \vs_\vtheta(\rvx', t)
+ g(t)^2 \norm{\vs_\vtheta(\rvx', t)}_2^2 -2 \nabla_{\rvx'} \cdot \vf(\rvx', t)\right] \ud t$},
\end{align*}
and $\bounddsm(\rvx)$ is given by
\begin{small}
\begin{multline*}
 -\mbb{E}_{p_{0T}(\rvx' \mid \rvx)}[\log \pi(\rvx')] 
+ \frac{1}{2}\int_{0}^T \mbb{E}_{p_{0t}(\rvx' \mid \rvx)}\left[g(t)^2 \norm{\vs_\vtheta(\rvx', t) - \nabla_{\rvx'} \log p_{0t}(\rvx' \mid \rvx)}_2^2\right] \ud t\\
- \frac{1}{2}\int_{0}^T \mbb{E}_{p_{0t}(\rvx' \mid \rvx)}\left[g(t)^2\norm{\nabla_{\rvx'} \log p_{0t}(\rvx' \mid \rvx)}_2^2 + 2 \nabla_{\rvx'} \cdot \vf(\rvx', t)\right] \ud t.
\end{multline*}
\end{small}
\end{restatable}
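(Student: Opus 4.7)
The plan is to derive the pointwise bound $-\log\psde(\rvx)\le \boundsm(\rvx)$ by applying Ito's formula to $-\log q_t(\rvx(t))$ along sample paths of the forward SDE \cref{eq:sde} started at $\rvx(0)=\rvx$, where $q_t$ denotes the marginal density of $\hat{\rvx}_\vtheta(t)$ under \cref{eqn:reverse_sde_model} (so $q_0=\psde$ and $q_T=\pi$). Substituting the Fokker--Planck equation for $q_t$ eliminates the time-derivative and Laplacian of $\log q_t$, leaving an expression that differs from $\boundsm(\rvx)$ only by a non-positive squared term. The equality $\boundsm(\rvx)=\bounddsm(\rvx)$ is then an integration-by-parts identity applied time slice by time slice---the continuous-time analogue of the denoising score matching trick.

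Concretely, Ito's lemma applied to $-\log q_t(\rvx(t))$ along the forward diffusion yields
\begin{equation*}
-\log\psde(\rvx) = -\log\pi(\rvx(T)) + \int_0^T\!\bigl[\partial_t\log q_t + \vf\cdot\nabla\log q_t + \tfrac12 g^2\Delta\log q_t\bigr](\rvx(t),t)\,dt + M_T,
\end{equation*}
where $M_T$ is a stochastic integral with zero expectation under the regularity conditions inherited from \cref{thm:bound}. Independently, after reparameterising \cref{eqn:reverse_sde_model} via $\tau=T-t$, the density $q_t$ satisfies $\partial_t q_t=-\nabla\!\cdot\!\bigl[(\vf-g^2\vs_\vtheta)q_t\bigr]-\tfrac12 g^2\Delta q_t$; dividing by $q_t$ and using the identity $\Delta q_t/q_t=\Delta\log q_t+\norm{\nabla\log q_t}_2^2$ gives a closed-form expression for $\partial_t\log q_t$.

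I would then substitute this Fokker--Planck identity into the Ito formula and take expectation under $p_{0t}(\rvx'\mid\rvx)$. The $\Delta\log q_t$ contributions cancel (which is where the reverse-time sign in the Fokker--Planck equation matters) and so do the $\vf\cdot\nabla\log q_t$ contributions, leaving
\begin{equation*}
-\log\psde(\rvx) = -\E_{p_{0T}(\rvx'\mid\rvx)}[\log\pi(\rvx')] + \int_0^T \E_{p_{0t}(\rvx'\mid\rvx)}\!\bigl[-\nabla\!\cdot\!\vf + g^2\nabla\!\cdot\!\vs_\vtheta + g^2\vs_\vtheta\cdot\nabla\log q_t - \tfrac12 g^2\norm{\nabla\log q_t}_2^2\bigr]dt.
\end{equation*}
The algebraic identity $g^2\vs_\vtheta\cdot\nabla\log q_t - \tfrac12 g^2\norm{\nabla\log q_t}_2^2 = \tfrac12 g^2\norm{\vs_\vtheta}_2^2 - \tfrac12 g^2\norm{\vs_\vtheta-\nabla\log q_t}_2^2$ completes the square, and discarding the non-positive term $-\tfrac12 g^2\norm{\vs_\vtheta-\nabla\log q_t}_2^2$ produces exactly the upper bound $\boundsm(\rvx)$.

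For the equality $\boundsm(\rvx)=\bounddsm(\rvx)$, integration by parts against $p_{0t}(\rvx'\mid\rvx)$ at each time slice gives $\E_{p_{0t}(\rvx'\mid\rvx)}[\nabla_{\rvx'}\!\cdot\!\vs_\vtheta(\rvx',t)] = -\E_{p_{0t}(\rvx'\mid\rvx)}[\vs_\vtheta(\rvx',t)\cdot\nabla_{\rvx'}\log p_{0t}(\rvx'\mid\rvx)]$; substituting and completing the square with respect to $\nabla_{\rvx'}\log p_{0t}(\rvx'\mid\rvx)$ (rather than $\nabla\log q_t$) rearranges the integrand into $\bounddsm(\rvx)$. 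The main technical hurdle is the analytic justification of the Ito step: it requires $\log q_t$ to be $C^{1,2}$ in $(t,\rvx)$, $q_t$ strictly positive, and $\E\int_0^T g^2\norm{\nabla\log q_t(\rvx(t))}_2^2\,dt<\infty$ so that $M_T$ is a genuine zero-mean martingale, plus sufficient decay of $q_t$ and of $p_{0t}(\rvx'\mid\rvx)\vs_\vtheta(\rvx',t)$ at infinity for the integrations by parts. I would absorb all of these conditions into the same regularity assumptions that underwrite \cref{thm:bound}.
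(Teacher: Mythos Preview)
Your argument is correct and takes a genuinely different route from the paper's. The paper first proves an \emph{expectation} bound $-\E_{p(\rvx)}[\log\psde(\rvx)]\le \E_{p(\rvx)}[\boundsm(\rvx)]$ by writing the left-hand side as $\KL(p\mathrel\|\psde)+\mcal{H}(p)$, bounding the KL term via Girsanov on path measures (their \cref{thm:bound}), and computing $\mcal{H}(p)$ via a Fokker--Planck identity for $p_t$ (their \cref{thm:entropy}); it then obtains the pointwise bound by noting that the inequality holds for \emph{every} admissible $p$ and ``cancelling'' $\E_{p(\rvx)}$ from both sides. You instead work directly with a single trajectory: apply It\^o to $-\log q_t(\rvx(t))$, substitute the backward Fokker--Planck equation for $q_t$ (the marginals of the \emph{model} SDE), complete the square in $\nabla\log q_t$, and drop the non-positive term. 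This is more elementary in that it avoids both the path-space Girsanov machinery and the ``for all $p$'' delocalization step, and it makes the slack term $-\tfrac12\int_0^T\E_{p_{0t}(\rvx'\mid\rvx)}\bigl[g(t)^2\norm{\vs_\vtheta-\nabla\log q_t}_2^2\bigr]\ud t$ explicit at the pointwise level. The paper's decomposition, on the other hand, isolates $\KL(p\mathrel\|\psde)$ and $\mcal{H}(p)$ as separate reusable identities (which they exploit elsewhere, e.g.\ for the entropy estimator in their remark after \cref{thm:entropy}). For the equality $\boundsm(\rvx)=\bounddsm(\rvx)$ both proofs use the same integration-by-parts/denoising trick against $p_{0t}(\rvx'\mid\rvx)$.
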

\begin{proof}[Sketch of proof]
For any continuous data distribution $p$, we have $-\E_{p(\rvx)}[\log \psde(\rvx)] = \KL(p\mathrel\| \psde) + \mcal{H}(p)$, where $\mcal{H}(p)$ denotes the differential entropy of $p$. The KL term can be bounded according to \cref{thm:bound}, while the differential entropy has an identity similar to \cref{thm:1} (see \cref{thm:entropy} in \cref{app:proof}). Combining the bound of $\KL(p\mathrel\| \psde)$ and the identity of $\mcal{H}(p)$, we obtain a bound on $-\E_{p(\rvx)}[\log \psde(\rvx)]$ that holds for all continuous distribution $p$. Removing the expectation over $p$ on both sides then gives us a bound on $-\log \psde(\rvx)$ for an individual datapoint $\rvx$. We can simplify this bound to $\boundsm(\rvx)$ and $\bounddsm(\rvx)$ with similar techniques to \cite{hyvarinen-EstimationNonNormalizedStatistical-2005} and \cite{vincent2011connection}.
\end{proof}

We provide two equivalent bounds $\boundsm(\rvx)$ and $\bounddsm(\rvx)$. The former bears resemblance to score matching while the second resembles denoising score matching. Both admit efficient unbiased estimators when $\vf(\cdot, t)$ is linear, as the time integrals and expectations in $\boundsm(\rvx)$ and $\bounddsm(\rvx)$ can be estimated by samples of the form $(t, \rvx^\prime)$, where $t$ is uniformly sampled over $[0, T]$, and $\rvx^\prime \sim p_{0t}(\rvx^\prime \mid \rvx)$. Since the transition distribution $p_{0t}(\rvx^\prime \mid \rvx)$ is a tractable Gaussian when $\vf(\cdot, t)$ is linear, we can easily sample from it as well as evaluating $\nabla_{\rvx^\prime} \log p_{0t}(\rvx^\prime \mid \rvx)$ for computing $\bounddsm(\rvx)$. Moreover, the divergences $\nabla_\rvx \cdot \vs_\vtheta(\rvx, t)$ and $\nabla_\rvx \cdot \vf(\rvx, t)$ in $\boundsm(\rvx)$ and $\bounddsm(\rvx)$ have efficient unbiased estimators via the Skilling--Hutchinson trick~\cite{skilling1989eigenvalues,hutchinson1989stochastic}.

We can view $\bounddsm(\rvx)$ as a continuous-time generalization of the evidence lower bound (ELBO) in diffusion probabilistic models~\cite{sohl2015deep,ho2020denoising}. Our bounds in \cref{thm:elbo_bound2} are not only useful for optimizing and estimating $\log \psde(\rvx)$, but also for training the drift and diffusion coefficients $\vf(\rvx, t)$ and $g(t)$ jointly with the score-based model $\vs_\vtheta(\rvx, t)$; we leave this avenue of research for future work. In addition, we can plug the bounds in \cref{thm:elbo_bound2} into any objective that involves minimizing $-\log \psde(\rvx)$ to obtain an efficient surrogate. \cref{sec:dequantization} provides an example, where we perform variational dequantization to further improve the likelihood of score-based diffusion models.

Similar to the observation in \cref{sec:kl}, $\boundsm(\rvx)$ and $\bounddsm(\rvx)$ are not guaranteed to upper bound $-\log \pode(\rvx)$. However, they should become approximate upper bounds when $\vs_\vtheta(\rvx, t)$ is trained sufficiently close to the ground truth. In fact, we empirically observe that $-\log \pode(\rvx) \leq \boundsm(\rvx) = \bounddsm(\rvx)$ holds true for $\rvx$ sampled from the dataset in all experiments.

\subsection{Numerical stability}
So far we have assumed that the SDEs are defined in the time horizon $[0, T]$ in all theoretical analysis. In practice, however, we often face numerical instabilities when $t\to 0$. To avoid them, we choose a small non-zero starting time $\epsilon > 0$, and train/evaluate score-based diffusion models in the time horizon $[\epsilon, T]$ instead of $[0, T]$. Since $\epsilon$ is small, training score-based diffusion models with likelihood weighting still approximately maximizes their model likelihood. Yet at test time, the likelihood bound as computed in \cref{thm:elbo_bound2} is slightly biased, rendering the values not directly comparable to results reported in other works. We use Jensen's inequality to correct for this bias in our experiments, for which we provide a detailed explanation in \cref{app:stability}.

\subsection{Related work}
Our result in \cref{thm:1} can be viewed as a generalization of De Bruijin's identity (\cite{stam-InequalitiesSatisfiedQuantities-1959}, Eq. 2.12) from its original differential form to an integral form. De Bruijn's identity relates the rate of change of the Shannon entropy under an additive Gaussian noise channel to the Fisher information, a result which can be interpreted geometrically as relating the rate of change of the volume of a distribution's typical set to its surface area. 
Ref.~\cite{barron-EntropyCentralLimit-1986} (Lemma 1) builds on this result and presents an integral and relative form of de Bruijn's identity which relates the KL divergence to the integral of the relative Fisher information for a distribution of interest and a reference standard normal. 
More generally, various identities and inequalities involving the (relative) Shannon entropy and (relative) Fisher information have found use in proofs of the central limit theorem \cite{johnson-FisherInformationInequalities-2004}. 
Ref.~\cite{lyu-InterpretationGeneralizationScore-2009} (Theorem 1) covers similar ground to the relative form of de Bruijn's identity, but is perhaps the first to consider its implications for learning in probabilistic models by framing the discussion in terms of the score matching objective (\cite{hyvarinen-EstimationNonNormalizedStatistical-2005}, Eq. 2).

\section{Improving the likelihood of score-based diffusion models}
Our theoretical analysis implies that training with the likelihood weighting should improve the likelihood of score-based diffusion models. To verify this empirically, we test likelihood weighting with different model architectures, SDEs, and datasets. We observe that switching to likelihood weighting increases the variance of the training objective and propose to counteract it with importance sampling. We additionally combine our bound with variational dequantization~\cite{ho2019flow++} which narrows the gap between the likelihood of continuous and discrete probability models. All combined, we observe consistent improvement of likelihoods for both $\psde$ and $\pode$ across all settings. We term the model $\pode$ trained in this way \emph{ScoreFlow}, and show that it achieves excellent likelihoods on CIFAR-10~\cite{krizhevsky2014cifar} and ImageNet $32\!\times\! 32$~\cite{van2016pixel}, on a par with cutting-edge autoregressive models.

\subsection{Variance reduction via importance sampling}\label{sec:iw}

\begin{figure}
    \centering
    \includegraphics[width=0.45\textwidth]{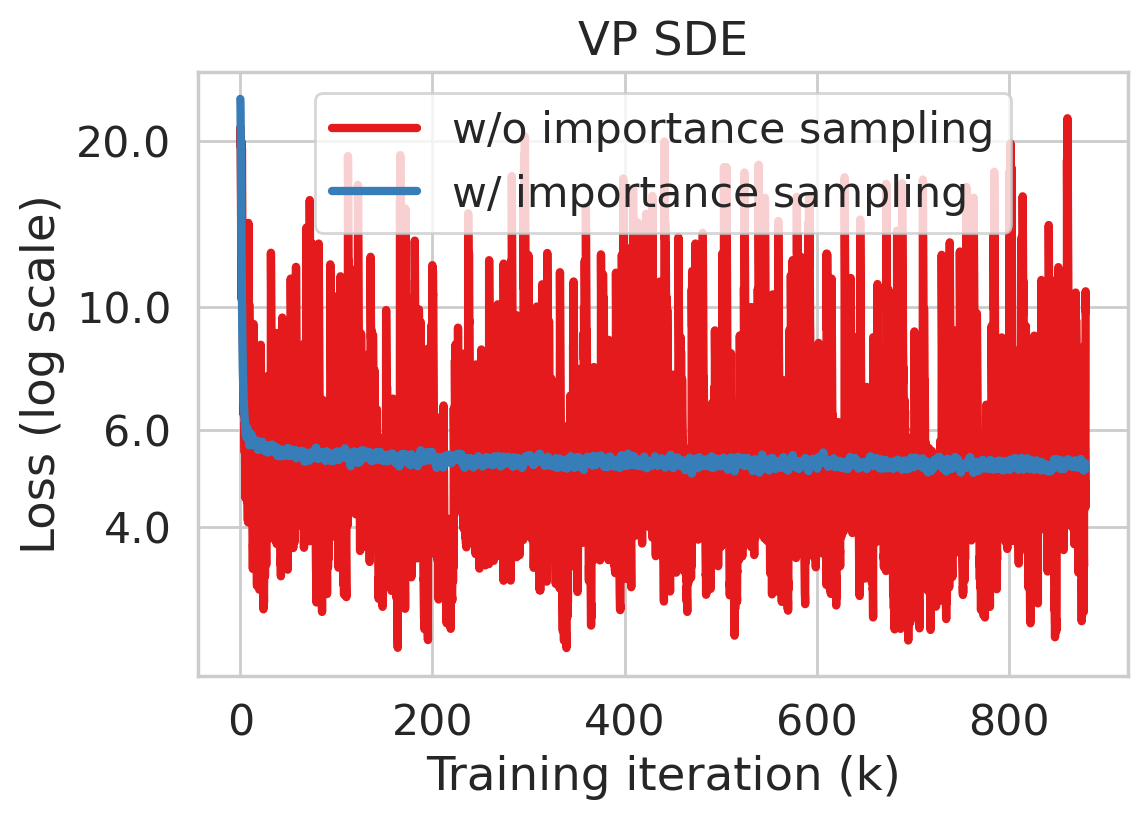}%
    \includegraphics[width=0.38\textwidth]{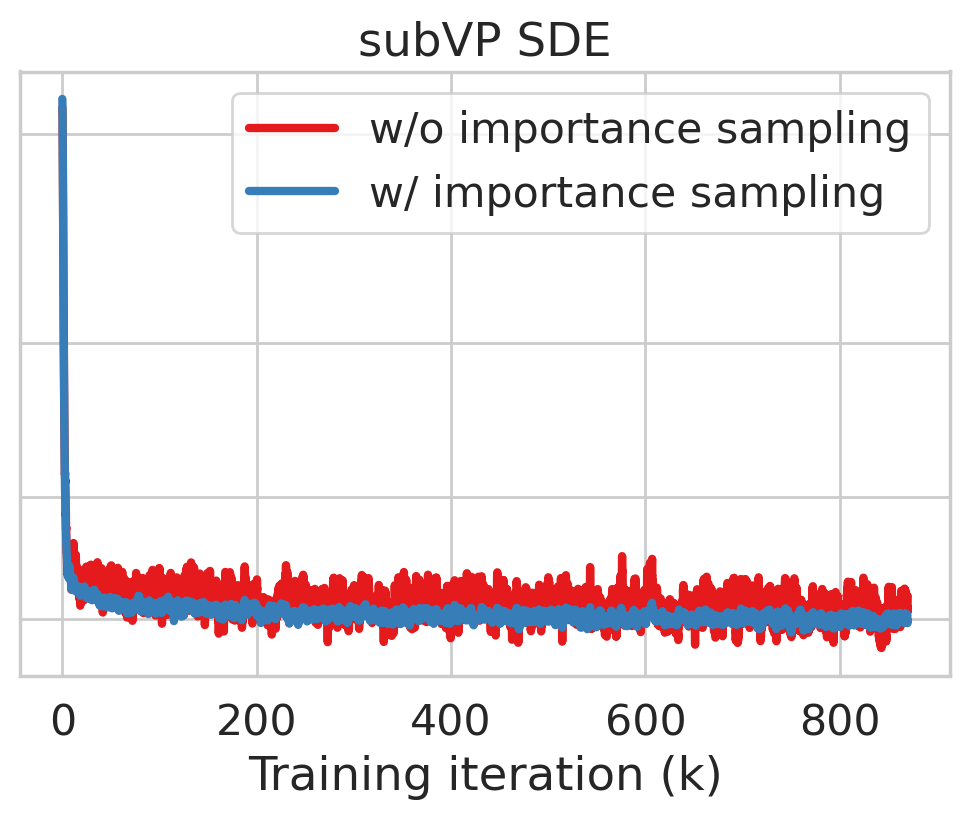}
    \hspace{0.8em}
    \caption{Learning curves with the likelihood weighting on the CIFAR-10 dataset (smoothed with exponential moving average). Importance sampling significantly reduces the loss variance.}
    \label{fig:reweighting}
    \vspace{-7.5pt}
\end{figure}

As mentioned in \cref{sec:rev-sde}, we typically use Monte Carlo sampling to approximate the time integral in $\jdsm(\vtheta; \lambda(\cdot))$ during training. In particular, we first uniformly sample a time step $t \sim \mcal{U}[0, T]$, and then use the denoising score matching loss at $t$ as an estimate for the whole time integral. This Monte Carlo approximation is much faster than computing the time integral accurately, but introduces additional variance to the training loss.

We empirically observe that this Monte Carlo approximation suffers from a larger variance when using our likelihood weighting instead of the original weightings in \cite{song2020score}. Leveraging importance sampling, we propose a new Monte Carlo approximation that significantly reduces the variance of learning curves under likelihood weighting, as demonstrated in \cref{fig:reweighting}. In fact, with importance sampling, the loss variance (after convergence) decreases from 98.48 to 0.068 on CIFAR-10, and decreases from 0.51 to 0.043 on ImageNet.

Let $\lambda(t) = \alpha(t)^2$ denote the weightings in \cite{song2020score} (reproduced in \cref{tab:sde}), and recall that our likelihood weighting is $\lambda(t) = g(t)^2$. Since $\alpha(t)^2$ empirically leads to lower variance, we can use a proposal distribution $p(t) \coloneqq \nicefrac{g(t)^2}{\alpha(t)^2 Z}$ to change the weighting in $\jdsm(\vtheta; g(\cdot)^2)$ from $g(t)^2$ to $\alpha(t)^2$ with importance sampling, where $Z$ is a normalizing constant that ensures $\int p(t) \ud t = 1$. Specifically, for any function $h(t)$, we estimate the time integral $\int_0^T g(t)^2 h(t) \ud t$ with
\begin{align}
    \int_0^T g(t)^2 h(t) \ud t = Z \int_0^T p(t) \alpha(t)^2 h(t) \ud t \approx TZ \alpha(\tilde{t})^2 h(\tilde{t}), \label{eqn:importance_sampling}
\end{align}
where $\tilde{t}$ is a sample from $p(t)$. When training score-based models with likelihood weighting, $h(t)$ corresponds to the denoising score matching loss at time $t$.

Ref.~\cite{nichol2021improved} also observes that optimizing the ELBO for diffusion probabilistic models has large variance, and proposes to reduce it with importance sampling. They build their proposal distribution based on historical loss values stored at thousands of discrete time steps. Despite this similarity, our method is easier to implement without needing to maintain history, can be used for evaluation, and is particularly suited to the continuous-time setting.

\subsection{Variational dequantization}\label{sec:dequantization}

Digital images are discrete data, and must be dequantized when training continuous density models like normalizing flows~\cite{dinh2014nice,DinhSB17RealNVP} and score-based diffusion models. One popular approach to this is uniform dequantization~\cite{uria2013rnade,TheisOB15}, where we add small uniform noise over $[0, 1)$ to images taking values in $\{0,1,\cdots,255\}$. As shown in \cite{TheisOB15}, training a continuous model $p_\vtheta(\rvx)$ on uniformly dequantized data implicitly maximizes a lower bound on the log-likelihood of a certain discrete model $P_\vtheta(\rvx)$. Due to the gap between $p_\vtheta(\rvx)$ and $P_\vtheta(\rvx)$, comparing the likelihood of continuous density models to models which fit discrete data directly, such as autoregressive models~\cite{van2016pixel} or variational autoencoders, naturally puts the former at a disadvantage.

To minimize the gap between $p_\vtheta(\rvx)$ and $P_\vtheta(\rvx)$, ref.~\cite{ho2019flow++} proposes variational dequantization, where a separate normalizing flow model $q_\vphi(\rvu \mid \rvx)$ is trained to produce the dequantization noise by optimizing the following objective
\begin{align}
    \max_{\vphi} \E_{\rvx \sim p(\rvx)}\E_{\rvu \sim q_\vphi(\cdot \mid \rvx)}[\log p_\vtheta(\rvx + \rvu) - \log q_\vphi(\rvu \mid \rvx)]. \label{eqn:var_deq}
\end{align}
Plugging in the lower bound on $\log p_\vtheta(\rvx)$ from \cref{thm:elbo_bound2}, we can optimize \cref{eqn:var_deq} to improve the likelihood of score-based diffusion models.

\begin{table}
    \definecolor{h}{gray}{0.9}
	\caption{Negative log-likelihood (bits/dim) and sample quality (FID scores) on CIFAR-10 and ImageNet $32\!\times\! 32$. Abbreviations: ``NLL'' for ``negative log-likelihood''; ``Uni.\ deq.'' for ``Uniform dequantization''; ``Var.\ deq.'' for ``Variational dequantization''; ``LW'' for ``likelihood weighting''; and ``IS'' for ``importance sampling''. Bold indicates best result in the corresponding column. Shaded rows represent models trained with both likelihood weighting and importance sampling.
	}\label{tab:results}
	\vspace{7.5pt}
	\centering
	{\setlength{\extrarowheight}{1.5pt}
	\begin{adjustbox}{max width=\linewidth}
	\begin{tabular}{ll|cccc|c|cccc|c}
	    \Xhline{3\arrayrulewidth}
	    \multirow{3}{*}[-1pt]{Model} & \multirow{3}{*}[-1pt]{SDE} & \multicolumn{5}{c|}{\textbf{CIFAR-10}} & \multicolumn{5}{c}{\textbf{ImageNet $\mathbf{32\!\times\! 32}$}} \\
        \cline{3-12}
	     & & \multicolumn{2}{c}{Uni.\ deq.} & \multicolumn{2}{c|}{Var.\ deq.} & \multirow{2}{*}[-1pt]{FID$\downarrow$} & \multicolumn{2}{c}{Uni.\ deq.} &  \multicolumn{2}{c|}{Var.\ deq.} & \multirow{2}{*}[-1pt]{FID$\downarrow$} \\
	      &   & NLL$\downarrow$ & Bound$\downarrow$ & NLL$\downarrow$ & Bound$\downarrow$ &  & NLL$\downarrow$ & Bound$\downarrow$ & NLL$\downarrow$ & Bound$\downarrow$ & \\ 
	      \hline
	      Baseline & VP & 3.16 & 3.28 &	3.04 &	3.14 & 3.98 & 3.90 & 3.96 &	3.84 & 3.91 & \textbf{8.34}\\
	      Baseline + LW & VP & 3.06 & 3.18 & 2.94 &	3.03 & 5.18 & 3.91 & 3.96 &	3.86 & 3.92 & 17.75\\
	      \rowcolor{h} 
	      Baseline + LW + IS & VP & 2.95 & 3.08 & 2.83 & 2.94 & 6.03 & 3.86 & 3.92 & 3.80 & 3.88 & 11.15\\
	      Deep & VP & 3.13 & 3.25 & 3.01 & 3.10 & 3.09 & 3.89 & 3.95 & 3.84 & 3.90 & 8.40\\
	      Deep + LW & VP & 3.06 & 3.17 & 2.93 & 3.02 & 7.88 & 3.91 & 3.96 & 3.86 & 3.92 & 17.73\\
	      \rowcolor{h}
	      Deep + LW + IS & VP & 2.93 & 3.06 & \textbf{2.80} & 2.92 & 5.34 & 3.85 & 3.92 & 3.79 & 3.88 & 11.20\\
	      Baseline & subVP & 2.99 & 3.09 & 2.88 & 2.98 & 3.20 & 3.87 & 3.92 & 3.82 & 3.88 & 8.71 \\
	      Baseline + LW & subVP & 2.97 & 3.07 & 2.86 & 2.96 & 7.33 & 3.87 & 3.92 & 3.82 & 3.88 & 12.99\\
	      \rowcolor{h}
	      Baseline + LW + IS & subVP & 2.94 & 3.05 & 2.84 & 2.94 & 5.58 & 3.84 & 3.91 & 3.79 & 3.87 & 10.57\\
	      Deep & subVP & 2.96 & 3.06 & 2.85 & 2.95 & \textbf{2.86} & 3.86 & 3.91 & 3.81 & 3.87 & 8.87 \\
	      Deep + LW & subVP & 2.95 & 3.05 & 2.85 & 2.94 & 6.57 & 3.88 & 3.93 & 3.83 & 3.88 & 16.55 \\
	      \rowcolor{h}
	      Deep + LW + IS & subVP & \textbf{2.90} & \textbf{3.02} & 2.81 & \textbf{2.90} & 5.40 & \textbf{3.82} & \textbf{3.90} & \textbf{3.76} & \textbf{3.86} & 10.18\\
	    \Xhline{3\arrayrulewidth}
	\end{tabular}
	\end{adjustbox}
	}
\end{table}

\subsection{Experiments}
We empirically test the performance of likelihood weighting, importance sampling and variational dequantization across multiple architectures of score-based models, SDEs, and datasets. In particular, we consider DDPM++ (``Baseline'' in \cref{tab:results}) and DDPM++ (deep) (``Deep'' in \cref{tab:results}) models with VP and subVP SDEs~\cite{song2020score} on CIFAR-10~\cite{krizhevsky2014cifar} and ImageNet $32\!\times\! 32$~\cite{van2016pixel} datasets. We omit experiments on the VE SDE since (i) under this SDE our likelihood weighting is the same as the original weighting in \cite{song2020score}; (ii) we empirically observe that the best VE SDE model achieves around 3.4 bits/dim on CIFAR-10 in our experiments, which is significantly worse than other SDEs. For each experiment, we report $-\E[\log \pode(\rvx)]$ (``Negative log-likelihood'' in \cref{tab:results}), and the upper bound $\E[\bounddsm(\rvx)]$ on $-\E[\log \psde(\rvx)]$ (``Bound'' in \cref{tab:results}). In addition, we report FID scores~\cite{HeuselRUNH17} for samples from $\pode$, produced by solving the corresponding ODE with the Dormand--Prince RK45~\cite{dormand1980family} solver. Unless otherwise noted, we apply horizontal flipping as data augmentation for training models on CIFAR-10, so as to match the settings in \cite{song2020score,ho2020denoising}. Detailed description of all our experiments can be found in \cref{app:stability,app:exp}.

We summarize all results in \cref{tab:results}. Our key observations are as follows:
\begin{enumerate}%
    \item Although \cref{thm:elbo_bound2} only guarantees $\E[\bounddsm(\rvx)] \geq -\E[\log \psde(\rvx)]$, and in general we have $\psde \neq \pode$, we still find that $\E[\bounddsm(\rvx)]$ (``Bound'' in \cref{tab:results})  $\geq -\E[\log \pode(\rvx)]$ (``NLL'' in \cref{tab:results}) in all our settings.
    \item When all conditions are fixed except for the weighting in the training objective, having a lower value of the bound for $\psde$ always leads to a lower negative log-likelihood for $\pode$.
    \item With only likelihood weighting, we can uniformly improve the likelihood of $\pode$ and the bound of $\psde$ on CIFAR-10 across model architectures and SDEs, but it is not sufficient to guarantee likelihood improvement on ImageNet $32\times32$.
    \item By combining importance sampling and likelihood weighting, we are able to achieve uniformly better likelihood for $\pode$ and bounds for $\psde$ across all model architectures, SDEs, and datasets, with only slight degradation of sample quality as measured by FID~\cite{HeuselRUNH17}. 
    \item Variational dequantization uniformly improves both the bound for $\psde$ and the negative log-likelihood (NLL) of $\pode$ in all settings, regardless of likelihood weighting.
\end{enumerate}
Our experiments confirm that with importance sampling, likelihood weighting is not only effective for maximizing the lower bound for the log-likelihood of $\psde$, but also improving the log-likelihood of $\pode$. In agreement with \cite{ho2020denoising,nichol2021improved}, we observe that models achieving better likelihood tend to have worse FIDs. However, we emphasize that this degradation of FID is small, and samples actually have no obvious difference in visual quality (see \cref{fig:cifar,fig:imagenet}). To trade likelihood for FID, we can use weighting functions that interpolate between likelihood weighting and the original weighting functions in \cite{song2020score}. Our FID scores are still much better than most other likelihood-based models. 

\begin{wraptable}{r}{0.5\linewidth}
    \vspace{-0.4cm}
    \centering
    \caption{NLLs on CIFAR-10 and ImageNet 32x32.}
    \vspace{-1.5pt}
    {\setlength{\extrarowheight}{1.5pt}
    \begin{adjustbox}{max width=\linewidth}
    \begin{tabular}{lcc}
    \Xhline{3\arrayrulewidth}
        Model & CIFAR-10 & ImageNet \\ \hline
        FFJORD~\cite{grathwohl-FFJORDFreeformContinuous-2019} & 3.40 & -\\
        Flow++~\cite{ho2019flow++} &  3.08 & 3.86 \\
        Gated PixelCNN~\cite{oord2016conditional} & 3.03 & 3.83\\
        VFlow~\cite{chen2020vflow} & 2.98 & 3.83\\
        PixelCNN++~\cite{SalimansK0K17PixelCNNPP} & 2.92 & -\\
        NVAE~\cite{VahdatK20NVAE} & 2.91 & 3.92\\ 
        Image Transformer~\cite{parmar2018image} & 2.90 & 3.77\\ 
        Very Deep VAE~\cite{child2021very} & 2.87 & 3.80\\
        PixelSNAIL~\cite{chen2018pixelsnail} & 2.85 & 3.80\\
        $\delta$-VAE~\cite{RazaviOPV19DeltaVAE} & 2.83 & 3.77\\
        Sparse Transformer~\cite{child2019generating} & \textbf{2.80} & - \\ \hline
        ScoreFlow (Ours) & 2.83 & \textbf{3.76}\\
    \Xhline{3\arrayrulewidth}
    \end{tabular}
    \end{adjustbox}
    }
    \label{tab:showoff}
    \vspace{-0.4cm}
\end{wraptable}
We term $\pode$ a \emph{ScoreFlow} when its corresponding score-based model $\vs_\vtheta(\rvx, t)$ is trained with likelihood weighting, importance sampling, and variational dequantization combined. It can be viewed as a continuous normalizing flow, but is parameterized by a score-based model and trained in a more efficient way. With variational dequantization, we show ScoreFlows obtain competitive negative log-likelihoods (NLLs) of 2.83 bits/dim on CIFAR-10 and 3.76 bits/dim on ImageNet $32\!\times\! 32$. Here the ScoreFlow on CIFAR-10 is trained without horizontal flipping (different from the setting in \cref{tab:results}). As shown in \cref{tab:showoff}, our results are on a par with the state-of-the-art autoregressive models on these tasks, and outperform all existing normalizing flow models. The likelihood on CIFAR-10 can be significantly improved by incorporating advanced data augmentation, as demonstrated in \cite{jun2020distribution,sinha2021consistency}. While we do not compare against them, we believe that incorporating the same data augmentation techniques can also improve the likelihood of ScoreFlows.

\section{Conclusion}

We propose an efficient training objective for approximate maximum likelihood training of score-based diffusion models. Our theoretical analysis shows that the weighted combination of score matching losses upper bounds the negative log-likelihood when using a particular weighting function which we term the likelihood weighting. By minimizing this upper bound, we consistently improve the likelihood of score-based diffusion models across multiple model architectures, SDEs, and datasets. When combined with variational dequantization, we achieve competitive likelihoods on CIFAR-10 and ImageNet $32\!\times\! 32$, matching the performance of best-in-class autoregressive models.

Our upper bound is analogous to the evidence lower bound commonly used for training variational autoencoders. Aside from promoting higher likelihood, the bound can be combined with other objectives that depend on the negative log-likelihood, and also enables joint training of the forward and backward SDEs, which we leave as a future research direction. Our results suggest that score-based diffusion models are competitive alternatives to continuous normalizing flows which enjoy the same tractable likelihood computation but with more efficient maximum likelihood training. 

\paragraph{Limitations and broader impact}
Despite promising experimental results, we would like to emphasize that there is no theoretical guarantee that improving the SDE likelihood will improve the ODE likelihood, and this is explicitly a limitation of our work. Score-based diffusion models also suffer from slow sampling. In our experiments, the ODE solver typically need around 550 and 450 evaluations of the score-based model for generation and likelihood computation on CIFAR-10 and ImageNet respectively, which is considerably slower than alternative generative models like VAEs and GANs. In addition, the current formulation of score-based diffusion models only supports continuous data, and cannot be naturally adapted to discrete data without resorting to dequantization. Same as other deep generative models, score-based diffusion models can potentially be used to generate harmful media contents such as ``deepfakes'', and might reflect and amplify undesirable social bias that could exist in the training dataset.

\section*{Author Contributions}
Yang Song wrote the code, ran the experiments, proposed and proved \cref{thm:bound,thm:elbo_bound2}, and wrote most of the paper. Conor Durkan proposed and proved a first version of \cref{thm:1}, and wrote the paper. Iain Murray and Stefano Ermon co-advised the project and provided helpful edits to the draft.

\begin{ack}
 The authors would like to thank Sam Power, George Papamakarios, Adji Dieng for helpful feedback, and Duoduo for providing her photos in \cref{fig:schematic}. This research was supported by NSF (\#1651565, \#1522054, \#1733686), ONR (N000141912145), AFOSR (FA95501910024), ARO (W911NF-21-1-0125), Sloan Fellowship, and Google TPU Research Cloud. This research was also supported by the EPSRC Centre for Doctoral Training in Data Science, funded by the UK Engineering and Physical Sciences Research Council (grant EP/L016427/1), and the University of Edinburgh. Yang Song was supported by the Apple PhD Fellowship in AI/ML.
\end{ack}

\bibliographystyle{abbrv}
\bibliography{bibliography}

\section*{Checklist}

\begin{enumerate}

\item For all authors...
\begin{enumerate}
  \item Do the main claims made in the abstract and introduction accurately reflect the paper's contributions and scope?
    \answerYes{}
  \item Did you describe the limitations of your work?
    \answerYes{The limitations of our theory are discussed inline in \cref{sec:bound}.}
  \item Did you discuss any potential negative societal impacts of your work?
    \answerYes{Discussed in conclusion.}
  \item Have you read the ethics review guidelines and ensured that your paper conforms to them?
    \answerYes{}
\end{enumerate}

\item If you are including theoretical results...
\begin{enumerate}
  \item Did you state the full set of assumptions of all theoretical results?
    \answerYes{The full set of assumptions are in \cref{app:proof}.}
	\item Did you include complete proofs of all theoretical results?
    \answerYes{The full proofs are all provided in \cref{app:proof}.}
\end{enumerate}

\item If you ran experiments...
\begin{enumerate}
  \item Did you include the code, data, and instructions needed to reproduce the main experimental results (either in the supplemental material or as a URL)?
    \answerYes{Code is released at \href{https://github.com/yang-song/score_flow}{https://github.com/yang-song/score\_flow}.}
  \item Did you specify all the training details (e.g., data splits, hyperparameters, how they were chosen)?
    \answerYes{All experimental details are in \cref{app:exp}.}
	\item Did you report error bars (e.g., with respect to the random seed after running experiments multiple times)?
    \answerNo{Training our models is expensive and we do not have enough resource/time for multiple repetitions of our experiments.}
	\item Did you include the total amount of compute and the type of resources used (e.g., type of GPUs, internal cluster, or cloud provider)?
    \answerYes{Compute and resource types are given in \cref{app:exp}.}
\end{enumerate}

\item If you are using existing assets (e.g., code, data, models) or curating/releasing new assets...
\begin{enumerate}
  \item If your work uses existing assets, did you cite the creators?
    \answerYes{We cited the creators of CIFAR-10 and down-sampled ImageNet.}
  \item Did you mention the license of the assets?
    \answerNo{Licenses are standard and can be found online.}
  \item Did you include any new assets either in the supplemental material or as a URL?
    \answerYes{Code and checkpoint are released at \href{https://github.com/yang-song/score_flow}{https://github.com/yang-song/score\_flow}.}
  \item Did you discuss whether and how consent was obtained from people whose data you're using/curating?
    \answerNo{All datasets used in our work are publicly available.}
  \item Did you discuss whether the data you are using/curating contains personally identifiable information or offensive content?
    \answerYes{We mentioned privacy issues of the ImageNet dataset inline in \cref{app:exp}.}
\end{enumerate}

\item If you used crowdsourcing or conducted research with human subjects...
\begin{enumerate}
  \item Did you include the full text of instructions given to participants and screenshots, if applicable?
    \answerNA{}
  \item Did you describe any potential participant risks, with links to Institutional Review Board (IRB) approvals, if applicable?
    \answerNA{}
  \item Did you include the estimated hourly wage paid to participants and the total amount spent on participant compensation?
    \answerNA{}
\end{enumerate}

\end{enumerate}

\newpage
\appendix
\section{Proofs}\label{app:proof}
We first summarize the notations and assumptions used in our theorems.
\paragraph{Notations} 
The drift and diffusion coefficients of the SDE in \cref{eq:sde} are denoted as $\vf: \mbb{R}^D \times [0,T] \to \mbb{R}^D$ and $g: [0, T] \to \mbb{R}$ respectively, where $[0, T]$ represents a fixed time horizon, and $\times$ denotes the Cartesian product. The solution to \cref{eq:sde} is a stochastic process $\{\rvx(t) \}_{t\in[0, T]}$. We use $p_t$ to represent the marginal distribution of $\rvx(t)$, and $p_{0t}(\rvx' \mid \rvx)$ to denote the transition distribution from $\rvx(0)$ to $\rvx(t)$. The data distribution and prior distribution are given by $p$ and $\pi$. We use $\mcal{C}$ to denote all continuous functions, and let $\mcal{C}^k$ denote the family of functions with continuous $k$-th order derivatives. For any vector-valued function $\vh: \mbb{R}^D \times [0,T] \to \mbb{R}^D$, we use $\nabla \cdot \vh(\rvx, t)$ to represent its divergence with respect to the first input variable.
\paragraph{Assumptions}
We make the following assumptions throughout the paper:
\begin{enumerate}[label=(\roman*)]
    \item $p(\rvx) \in \mcal{C}^2$ and $\E_{\rvx \sim p}\big[\norm{\rvx}_2^2\big] < \infty$.
    \item $\pi(\rvx) \in \mcal{C}^2$ and $\E_{\rvx \sim \pi}\big[ \norm{\rvx}_2^2 \big] < \infty$.
    \item $\forall t\in[0,T]: \vf(\cdot, t) \in \mcal{C}^1$, $\exists C>0~\forall \rvx \in \mbb{R}^D, t\in [0,T]: \norm{\vf(\rvx, t)}_2 \leq C(1 + \norm{\rvx}_2)$.
    \item $\exists C>0, \forall \rvx, \rvy \in \mbb{R}^D: \norm{\vf(\rvx, t) - \vf(\rvy, t)}_2 \leq C \norm{\rvx - \rvy}_2$.
    \item $g \in \mcal{C}$ and $\forall t\in[0,T], |g(t)| > 0$.
    \item For any open bounded set $\mcal{O}$, $\int_0^T \int_\mcal{O} \norm{p_t(\rvx)}_2^2 + D g(t)^2 \norm{\nabla_\rvx p_t(\rvx)}_2^2 \ud \rvx \ud t < \infty$.
    \item $\exists C>0~\forall \rvx \in \mbb{R}^D, t \in [0,T]: \norm{\nabla_\rvx \log p_t(\rvx)}_2 \leq C(1 + \norm{\rvx}_2)$.
    \item $\exists C>0, \forall \rvx, \rvy \in \mbb{R}^D: \norm{\nabla_\rvx \log p_t(\rvx) - \nabla_\rvy \log p_t(\rvy)}_2 \leq C \norm{\rvx - \rvy}_2$.
    \item $\exists C>0~\forall \rvx \in \mbb{R}^D, t \in [0,T]: \norm{\vs_\vtheta(\rvx, t)}_2 \leq C(1 + \norm{\rvx}_2)$.
    \item $\exists C>0, \forall \rvx, \rvy \in \mbb{R}^D: \norm{\vs_\vtheta(\rvx, t) - \vs_\vtheta(\rvy, t)}_2 \leq C \norm{\rvx - \rvy}_2$.
    \item Novikov's condition: $\E\Big[ \exp\Big( \frac{1}{2}\int_0^T \norm{\nabla_\rvx \log p_t(\rvx) - \vs_\vtheta(\rvx, t)}_2^2 \ud t \Big) \Big] < \infty$.
    \item $\forall t \in [0,T]~\exists k > 0: p_t(\rvx) = O(e^{-\norm{\rvx}_2^k})$ as $\norm{\rvx}_2 \to \infty$.
\end{enumerate}
Below we provide all proofs for our theorems.
\bound*
\begin{proof}
We denote the path measure of $\{\rvx(t)\}_{t\in[0,T]}$ and $\{\hat{\rvx}_\vtheta(t)\}_{t\in[0,T]}$ as $\bm{\mu}$ and $\bm{\nu}$ respectively. Due to assumptions (i) (ii) (iii) (iv) (v) (ix) and (x), both $\bm{\mu}$ and $\bm{\nu}$ are uniquely given by the corresponding SDEs. Consider a Markov kernel $K(\{\rvz(t)\}_{t\in[0,t]}, \rvy) \coloneqq \delta(\rvz(0) = \rvy)$. Since $\rvx(0) \sim p_0$ and $\hat{\rvx}_\vtheta(0) \sim p_\vtheta$, we have the following result
\begin{align*}
    \int K(\{\rvx(t)\}_{t\in[0,T]}, \rvx) \ud \bm{\mu}(\{\rvx(t)\}_{t\in[0,T]}) &= p_0(\rvx)\\
    \int K(\{\hat{\rvx}_\vtheta(t)\}_{t\in[0,T]}, \rvx) \ud \bm{\nu}(\{\hat{\rvx}_\vtheta(t)\}_{t\in[0,T]}) &= p_\vtheta(\rvx).
\end{align*}
Here the Markov kernel $K$ essentially performs marginalization of path measures to obtain ``sliced'' distributions at $t = 0$. We can use the data processing inequality with this Markov kernel to obtain
\begin{align}
    &\KL(p \mathrel\| p_\vtheta) =\KL(p_0 \mathrel\| p_\vtheta)  \notag\\
    = & \KL\bigg(\int K(\{\rvx(t)\}_{t\in[0,T]}, \rvx) \ud \bm{\mu}(\{\rvx(t)\}_{t\in[0,T]})\mathrel\bigg\| \int K(\{\hat{\rvx}_\vtheta(t)\}_{t\in[0,T]}, \rvx) \ud \bm{\nu}(\{\hat{\rvx}_\vtheta(t)\}_{t\in[0,T]})\bigg)\notag \\
    \leq & \KL(\bm{\mu}\mathrel\| \bm{\nu}).\label{eqn:data_processing}
\end{align}
Recall that by definition $\rvx(T) \sim p_T$ and $\hat{\rvx}_\vtheta(T) \sim \pi$. Leveraging the chain rule of KL divergences (see, for example, Theorem 2.4 in \cite{leonard2014some}), we have
\begin{align}
    \KL(\bm{\mu} \mathrel\| \bm{\nu}) = \KL(p_T \mathrel\| \pi ) + \E_{\rvz \sim p_T}[\KL(\bm{\mu}(\cdot \mid \rvx(T) = \rvz) \mathrel\| \bm{\nu}(\cdot \mid \hat{\rvx}_\vtheta(T) = \rvz))].\label{eqn:chain_rule}
\end{align}
Under assumptions (i) (iii) (iv) (v) (vi) (vii) (viii), the SDE in \cref{eq:sde} has a corresponding reverse-time SDE given by
\begin{align}
\ud \rvx = [\vf(\rvx, t) - g(t)^2 \nabla_\rvx \log p_t(\rvx)]\ud t + g(t) \ud \bar{\rvw}. \label{eqn:reverse_sde_proof}
\end{align}
Since \cref{eqn:reverse_sde_proof} is the time reversal of \cref{eq:sde}, it induces the same path measure $\bm{\mu}$. As a result, $\KL(\bm{\mu}(\cdot \mid \rvx(T) = \rvz)\mathrel\| \bm{\nu}(\cdot \mid \hat{\rvx}_\vtheta(T) = \rvz))$ can be viewed as the KL divergence between the path measures induced by the following two (reverse-time) SDEs:
\begin{gather*}
    \ud \rvx = [\vf(\rvx, t) - g(t)^2 \nabla_\rvx \log p_t(\rvx)]\ud t + g(t) \ud \bar{\rvw}, \quad \rvx(T) = \rvx\\
    \ud \hat{\rvx} = [\vf(\hat{\rvx}, t) - g(t)^2 \vs_\vtheta(\hat{\rvx}, t)]\ud t + g(t) \ud \bar{\rvw}, \quad \hat{\rvx}_\vtheta(T) = \rvx.
\end{gather*}
The KL divergence between two SDEs with shared diffusion coefficients and starting points exists under assumptions (vii) (viii) (ix) (x) (xi) (see, \eg, \cite{tzen-NeuralStochasticDifferential-2019a,li-ScalableGradientsStochastic-2020}), and can be computed via the Girsanov theorem~\cite{oksendal2013stochastic}:
\begin{align}
&\KL(\bm{\mu}(\cdot \mid \rvx(T) = \rvz)\mathrel\| \bm{\nu}(\cdot \mid \hat{\rvx}_\vtheta(T) = \rvz)) \notag\\
=& -\E_{\bm{\mu}}\Big[\log \frac{\ud \bm{\nu}}{\ud \bm{\mu}}\Big]\\
\stackrel{(i)}{=}& \E_{\bm{\mu}}\bigg[\int_0^T g(t) (\nabla_\rvx \log p_t(\rvx) - \vs_\vtheta(\rvx, t)) \ud \bar{\rvw}_t + \frac{1}{2} \int_0^T g(t)^2 \norm{\nabla_\rvx \log p_t(\rvx) - \vs_\vtheta(\rvx, t)}_2^2 \ud t\bigg]\notag \\
\stackrel{(ii)}{=}& \E_{\bm{\mu}}\left[\frac{1}{2} \int_0^T g(t)^2\norm{\nabla_\rvx \log p_t(\rvx) - \vs_\vtheta(\rvx, t)}_2^2 \ud t\right]\notag \\
=& \frac{1}{2} \int_0^T \E_{p_t(\rvx)}[g(t)^2\norm{\nabla_\rvx \log p_t(\rvx) - \vs_\vtheta(\rvx, t)}_2^2] \ud t \notag\\
=& \jsm(\vtheta; g(\cdot)^2),\label{eqn:girsanov}
\end{align}
where (i) is due to Girsanov Theorem II~\cite[Theorem 8.6.6]{oksendal2013stochastic}, and (ii) is due to the martingale property of It\^{o} integrals.
Combining \cref{eqn:data_processing,eqn:chain_rule,eqn:girsanov} completes the proof.
\end{proof}

\thm*
\begin{proof}
When $\pi = q_T$ and $\vs_\vtheta(\rvx, t) \equiv \nabla_\rvx \log q_t(\rvx)$, the reverse-time SDE that defines $\psde$, \ie,
\begin{align}
\ud \hat{\rvx} = [\vf(\hat{\rvx}, t) - g(t)^2 \vs_\vtheta(\hat{\rvx}, t)] \ud t + g(t) \ud \bar{\rvw}, \quad \hat{\rvx}_\vtheta(T) \sim \pi,
\end{align}
becomes equivalent to
\begin{align}
    \ud \hat{\rvx} = [\vf(\hat{\rvx}, t) - g(t)^2 \nabla_{\hat{\rvx}} \log q_t(\hat{\rvx})] \ud t + g(t) \ud \bar{\rvw}, \quad \hat{\rvx}_\vtheta(T) \sim q_T,
\end{align}
which yields the same stochastic process as the following forward-time SDE
\begin{align}
    \ud \hat{\rvx} = \vf(\hat{\rvx}, t) \ud t + g(t) \ud \rvw, \quad \hat{\rvx}_\vtheta(0) \sim q. \label{eqn:q_sde}
\end{align}
Since $\hat{\rvx}_\vtheta(0) \sim \psde$ by definition, we immediately have $\psde = q$. Similarly, the ODE that defines $\pode$ is
\begin{align}
    \frac{\ud \tilde{\rvx}}{\ud t} = \vf_\vtheta(\tilde{\rvx}, t) - \frac{1}{2}g(t)^2 \vs_\vtheta(\tilde{\rvx}, t), \quad \tilde{\rvx}_\vtheta(T) \sim \pi,
\end{align}
which is equivalent to the following when $q_T = \pi$ and $\vs_\vtheta(\rvx, t) \equiv \nabla_\rvx \log q_t(\rvx)$,
\begin{align}
    \frac{\ud \tilde{\rvx}}{\ud t} = \vf_\vtheta(\tilde{\rvx}, t) - \frac{1}{2}g(t)^2 \nabla_{\tilde{\rvx}} \log q_t(\tilde{\rvx}, t), \quad \tilde{\rvx}_\vtheta(T) \sim q_T. \label{eqn:q_ode}
\end{align}
The theory of probability flow ODEs~\cite{song2020score} guarantees that \cref{eqn:q_sde} and \cref{eqn:q_ode} share the same set of marginal distributions, $\{q_t\}_{t\in[0,T]}$, which implies that $\tilde{\rvx}_\vtheta(0) \sim q$. Since by definition $\tilde{\rvx}_\vtheta(0) \sim \pode$, we have $\pode = q$.

The next part of the theorem can be proved by first rewriting the KL divergence from $p$ to $q$ in an integral form:
\begin{align}
    \KL(p(\rvx)~\|~q(\rvx)) &\stackrel{(i)}{=} \KL(p_0(\rvx)\mathrel\| q_0(\rvx)) - \KL(p_T(\rvx)\mathrel\| q_T(\rvx)) + \KL(p_T(\rvx)\mathrel\| q_T(\rvx))\notag \\
    &\stackrel{(ii)}{=} \int_{T}^0 \frac{\partial \KL(p_t(\rvx)\mrel\| q_t(\rvx))}{\partial t} \ud t + \KL(p_T(\rvx)\mathrel\| q_T(\rvx)), \label{eqn:int}
\end{align}
where (i) holds due to our definition $p_0(\rvx) \equiv p(\rvx)$ and $q_0(\rvx) \equiv q(\rvx)$; (ii) is due to the fundamental theorem of calculus.

Next, we show how to rewrite \cref{eqn:int} as a mixture of score matching losses. 
The Fokker--Planck equation for the SDE in \cref{eq:sde} describes the time-evolution of the stochastic process's associated probability density function, and is given by
\begin{align*}
    \frac{\partial p_t(\rvx)}{\partial t} = \nabla_\rvx \cdot \Big( \frac{1}{2} g^2(t) p_t(\rvx) \nabla_{\rvx} \log p_t(\rvx) - \vf(\rvx,t) p_t(\rvx) \Big) = \nabla_{\rvx} \cdot (\vh_p(\rvx, t) p_t(\rvx)),
\end{align*}
where for simplified notations we define $\vh_p(\rvx, t) := \frac{1}{2}g^2(t) \nabla_{\rvx} \log p_t(\rvx) - \vf(\rvx, t)$. Similarly, $\frac{\partial q_t(\rvx)}{\partial t} = \nabla_{\rvx} \cdot (\vh_q(\rvx, t) q_t(\rvx))$. Since we assume $\log p_t(\rvx)$ and $\log q_t(\rvx)$ are smooth functions with at most polynomial growth at infinity (assumption (xii)), we have $\lim_{\rvx \to \infty} \vh_p(\rvx, t)p_t(\rvx) = \bm{0}$ and $\lim_{\rvx \to \infty}\vh_q(\rvx, t)q_t(\rvx) = \bm{0}$ for all $t$. Then, the time-derivative of $\KL(p_t~\|~q_t)$ can be rewritten in the following way:

\begin{align*}
    \frac{\partial \KL(p_t(\rvx)~\|~q_t(\rvx))}{\partial t}  =& \frac{\partial}{\partial t}\int p_t(\rvx) \log \frac{p_t(\rvx)}{q_t(\rvx)} \ud \rvx\\\
    =&\int \frac{\partial p_t(\rvx)}{\partial t} \log \frac{p_t(\rvx)}{q_t(\rvx)} \ud \rvx + \underbrace{\int \frac{\partial p_t(\rvx)}{\partial t} \ud \rvx}_{=0} - \int \frac{p_t(\rvx)}{q_t(\rvx)} \frac{\partial q_t(\rvx)}{\partial t} \ud \rvx\\
    =& \int \nabla_{\rvx} \cdot (\vh_p(\rvx, t) p_t(\rvx)) \log \frac{p_t(\rvx)}{q_t(\rvx)} \ud \rvx -\int \frac{p_t(\rvx)}{q_t(\rvx)}\nabla_{\rvx} \cdot (\vh_q(\rvx, t) q_t(\rvx))  \ud \rvx\\
    \stackrel{(i)}{=}& -\int p_t(\rvx)[\vh_p\tran(\rvx, t) - \vh_q\tran(\rvx, t)][\nabla_\rvx \log p_t(\rvx) - \nabla_\rvx \log q_t(\rvx)] \ud \rvx \\
    =& -\frac{1}{2} \int p_t(\rvx) g(t)^2 \norm{\nabla_\rvx \log p_t(\rvx) - \nabla_\rvx \log q_t(\rvx)}_2^2 \ud \rvx,
\end{align*}
where (i) is due to integration by parts. Combining with \cref{eqn:int}, we can conclude that
\begin{align}
   \KL(p~\|~q) = \frac{1}{2}\int_0^T \E_{\rvx \sim p_t(\rvx)}[g(t)^2 \norm{\nabla_\rvx \log p_t(\rvx) - \nabla_\rvx \log q_t(\rvx)}_2^2] \ud t + \KL(p_T~\|~q_T). 
\end{align}
Since $\psde = q$ and $q_T = \pi$, we also have
\begin{align}
   \KL(p~\|\psde) &= \frac{1}{2}\int_0^T \E_{\rvx \sim p_t(\rvx)}[g(t)^2 \norm{\nabla_\rvx \log p_t(\rvx) - \nabla_\rvx \log q_t(\rvx)}_2^2] \ud t + \KL(p_T~\|~q_T)\notag\\
   &= \jsm(\vtheta; g(\cdot)^2) + \KL(p_T\mathrel\| q_T),
\end{align}
which completes the proof.
\end{proof}

Using a similar technique to \cref{thm:1}, we can express the entropy of a distribution in terms of a time-dependent score function, as detailed in the following theorem.
\begin{theorem}\label{thm:entropy}
Let $\mcal{H}(p(\rvx))$ be the differential entropy of the initial probability density $p(\rvx)$. Under the same conditions in \cref{thm:1}, we have
\begin{align}
    \mcal{H}(p(\rvx)) &= \mcal{H}(p_T(\rvx)) 
    + \frac{1}{2} \int_0^T \E_{\rvx \sim p_t(\rvx)}\Big[2 \vf(\rvx, t)\tran \nabla_\rvx \log p_t(\rvx) - g(t)^2 \norm{\nabla_\rvx \log p_t(\rvx)}_2^2\Big]\ud t.\label{eqn:entropy1}\\
    &= \mcal{H}(p_T(\rvx)) 
    - \frac{1}{2} \int_0^T \E_{\rvx \sim p_t(\rvx)}\Big[2 \nabla\cdot \vf(\rvx, t) + g(t)^2 \norm{\nabla_\rvx \log p_t(\rvx)}_2^2\Big]\ud t.
    \label{eqn:entropy}
\end{align}
\end{theorem}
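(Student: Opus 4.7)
My plan is to apply the fundamental theorem of calculus to write
\[
\mcal{H}(p(\rvx)) = \mcal{H}(p_0(\rvx)) = \mcal{H}(p_T(\rvx)) - \int_0^T \frac{\ud}{\ud t}\mcal{H}(p_t(\rvx))\,\ud t,
\]
and then reduce the integrand to an expectation of the time-dependent score using the Fokker--Planck equation, essentially mirroring the derivation carried out for $\KL(p_t \mathrel\| q_t)$ in the proof of \cref{thm:1}.

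Differentiating $\mcal{H}(p_t) = -\int p_t(\rvx) \log p_t(\rvx)\,\ud \rvx$ under the integral and noting that $\int \partial_t p_t\,\ud \rvx = \partial_t \int p_t\,\ud \rvx = 0$ yields $\frac{\ud}{\ud t}\mcal{H}(p_t) = -\int (\partial_t p_t)\log p_t\,\ud \rvx$. Substituting the Fokker--Planck equation
\[
\frac{\partial p_t(\rvx)}{\partial t} = \nabla_\rvx \cdot\bigl(\vh_p(\rvx, t) p_t(\rvx)\bigr), \qquad \vh_p(\rvx, t) := \tfrac{1}{2}g(t)^2 \nabla_\rvx \log p_t(\rvx) - \vf(\rvx, t),
\]
and integrating by parts (the boundary terms vanish by assumption (xii), exactly as invoked in \cref{thm:1}), I obtain
\[
\frac{\ud}{\ud t}\mcal{H}(p_t) = \E_{p_t(\rvx)}\bigl[\vh_p(\rvx, t)\tran \nabla_\rvx \log p_t(\rvx)\bigr] = \tfrac{1}{2}g(t)^2 \E_{p_t(\rvx)}\bigl[\norm{\nabla_\rvx \log p_t(\rvx)}_2^2\bigr] - \E_{p_t(\rvx)}\bigl[\vf(\rvx, t)\tran \nabla_\rvx \log p_t(\rvx)\bigr].
\]
Plugging this back into the integral representation and flipping the overall sign gives \cref{eqn:entropy1}.

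To obtain \cref{eqn:entropy}, I perform one further integration by parts on the cross term using the identity $p_t \nabla_\rvx \log p_t = \nabla_\rvx p_t$:
\[
\E_{p_t(\rvx)}\bigl[\vf(\rvx, t)\tran \nabla_\rvx \log p_t(\rvx)\bigr] = \int \vf(\rvx, t)\tran \nabla_\rvx p_t(\rvx)\,\ud \rvx = -\E_{p_t(\rvx)}\bigl[\nabla_\rvx \cdot \vf(\rvx, t)\bigr],
\]
with the boundary contribution $\vf p_t$ vanishing at infinity. Substituting into \cref{eqn:entropy1} yields \cref{eqn:entropy}.

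The main technical obstacle is justifying the exchange of differentiation and integration together with the two integration-by-parts steps. All three reductions require showing that boundary terms such as $\vh_p(\rvx, t) p_t(\rvx) \log p_t(\rvx)$ and $\vf(\rvx, t) p_t(\rvx)$ vanish as $\norm{\rvx}_2 \to \infty$, which follows from the super-polynomial decay assumption (xii) combined with the linear-growth bounds (iii) and (vii) on $\vf$ and $\nabla_\rvx \log p_t$. These are exactly the same regularity ingredients used in the proof of \cref{thm:1}, so no additional assumptions are needed.
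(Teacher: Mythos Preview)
Your proposal is correct and follows essentially the same route as the paper: write $\mcal{H}(p_0)-\mcal{H}(p_T)$ as the time integral of $\partial_t\mcal{H}(p_t)$, substitute the Fokker--Planck equation in the form $\partial_t p_t=\nabla_\rvx\cdot(\vh_p p_t)$, integrate by parts using assumption (xii) to obtain $\E_{p_t}[\vh_p\tran\nabla_\rvx\log p_t]$, and then apply a second integration by parts on the $\vf\tran\nabla_\rvx\log p_t$ term to pass from \cref{eqn:entropy1} to \cref{eqn:entropy}. The paper invokes only assumption (xii) for the boundary terms, whereas you also cite (iii) and (vii), but this is a minor difference in presentation rather than substance.
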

\begin{proof}
Once more we proceed analogously to the proofs of \cref{thm:1}.
We have
\begin{align}
    &\mcal{H}(p(\rvx)) - \mcal{H}(p_T(\rvx)) 
    = \int_{T}^{0} \frac{\partial}{\partial t} \mcal{H}(p_{t}(\rvx)) \ud t.
    \label{eqn:thm3-entropy-integral}
\end{align}
Expanding the integrand, we have
\begin{align*}
    \frac{\partial}{\partial t} \mcal{H}(p_{t}(\rvx)) 
    &= - \frac{\partial}{\partial t} \int p_{t}(\rvx) \log p_{t}(\rvx) \ud \rvx \\
    &= - \int \frac{\partial p_{t}(\rvx)}{\partial t} \log p_{t}(\rvx) + \frac{\partial p_{t}(\rvx)}{\partial t} \ud \rvx \\
    &= - \int \frac{\partial p_{t}(\rvx)}{\partial t} \log p_{t}(\rvx) \ud \rvx - \frac{\partial }{\partial t} \underbrace{\int p_{t}(\rvx) \ud \rvx}_{= 1} \\
    &= - \int \nabla_\rvx \cdot (\vh_{p}(\rvx, t)p_t(\rvx)) \log p_{t}(\rvx) \ud \rvx \\
    &\stackrel{(i)}{=} \int p_t(\rvx) \vh_p\tran(\rvx,t) \nabla_{\rvx} \log p_{t}(\rvx) \ud \rvx \\
    &=  \frac{1}{2} \E_{\rvx \sim p_{t}(\rvx)} [ g(t)^2 \norm{\nabla_\rvx \log p_t(\rvx)}_2^2 - 2 \vf(\rvx, t)\tran \nabla_\rvx \log p_t(\rvx) ],
\end{align*}
where again (i) follows from integration by parts and the limiting behaviour of $ \vh_p $ given by assumption (xii).
Plugging this expression in for the integrand in \cref{eqn:thm3-entropy-integral} then completes the proof for \cref{eqn:entropy1}. For \cref{eqn:entropy}, we can once again perform integration by parts and leverage the limiting behavior of $p_t(\rvx)$ in assumption (xii) to get
\begin{align*}
    \E_{p_t(\rvx)}[\vf(\rvx, t)\tran \nabla_\rvx \log p_t(\rvx)] &= \int \vf(\rvx, t)\tran \nabla_\rvx p_t(\rvx) \ud \rvx
    = -\int p_t(\rvx) \nabla\cdot \vf(\rvx, t) \ud \rvx,
\end{align*}
which establishes the equivalence between \cref{eqn:entropy} and \cref{eqn:entropy1}.
\end{proof}

\paragraph{Remark} The formula in \cref{thm:entropy} provides a new way to estimate the entropy of a data distribution from \iid samples. Specifically, given $\{\rvx_1, \rvx_2, \cdots, \rvx_N\} \stackrel{\text{i.i.d.}}{\sim} p(\rvx)$ and an SDE like \cref{eq:sde}, we can first apply score matching to train a time-dependent score-based model such that $\vs_{\vtheta}(\rvx, t) \approx \nabla_\rvx \log p_t(\rvx)$, and then plug $\vs_{\vtheta}(\rvx, t)$ into \cref{eqn:entropy1} to obtain the following estimator of $\mcal{H}(p(\rvx))$:
\begin{align*}
    \mcal{H}(p_T(\rvx))  + \frac{1}{2N} \sum_{i=1}^N \int_0^T \Big[2 \vf(\rvx_i, t)\tran \vs_\vtheta(\rvx_i, t) - g(t)^2 \norm{\vs_\vtheta(\rvx_i, t)}_2^2\Big]\ud t,
\end{align*}
or plug it into \cref{eqn:entropy} to obtain the following alternative estimator
\begin{align*}
    \mcal{H}(p_T(\rvx))  - \frac{1}{2N} \sum_{i=1}^N \int_0^T \Big[2 \nabla \cdot \vf(\rvx_i, t) + g(t)^2 \norm{\vs_\vtheta(\rvx_i, t)}_2^2\Big]\ud t.
\end{align*}
Both estimators can be computed from a score-based model alone, and do not require training a density model. 

\begin{theorem}\label{thm:elbo_bound}
Let $p_{0t}(\rvx' \mid \rvx)$ denote the transition kernel from $p_0(\rvx)$ to $p_t(\rvx)$ for any $t \in (0,T]$. With the same conditions and notations in \cref{thm:bound}, we have
\begin{align}
-\mbb{E}_{p(\rvx)}[\log \psde(\rvx)] 
\leq & -\mbb{E}_{p_T(\rvx)}[\log \pi(\rvx)] 
+ \frac{1}{2}\int_{0}^T \mbb{E}_{\rvx \sim p_t(\rvx)}[2g(t)^2 \nabla \cdot \vs_\vtheta(\rvx, t) \notag \\
&+ g(t)^2 \norm{\vs_\vtheta(\rvx, t)}_2^2 -2 \nabla \cdot \vf(\rvx, t)] \ud t. \label{eqn:elbo1}\\
=&  -\mbb{E}_{p_T(\rvx)}[\log \pi(\rvx)]  \notag \\
&+ \frac{1}{2}\int_{0}^T \mbb{E}_{p_{0t}(\rvx' \mid \rvx)p(\rvx)}[g(t)^2 \norm{\vs_\vtheta(\rvx', t) - \nabla_{\rvx'} \log p_{0t}(\rvx' \mid \rvx)}_2^2 \notag\\
&- g(t)^2\norm{\nabla_{\rvx'} \log p_{0t}(\rvx' \mid \rvx)}_2^2 -2 \nabla \cdot \vf(\rvx', t)] \ud t.\label{eqn:elbo2}
\end{align}
\end{theorem}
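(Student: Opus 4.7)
The plan is to derive the bound by writing the negative expected log-likelihood as a KL divergence plus a differential entropy, and then combine the KL bound from \cref{thm:bound} with the entropy identity from \cref{thm:entropy}. Explicitly, for any continuous data distribution $p$ we have $-\mbb{E}_{p(\rvx)}[\log \psde(\rvx)] = \KL(p\mathrel\| \psde) + \mcal{H}(p)$. First I would apply \cref{thm:bound} to upper bound $\KL(p\mathrel\| \psde)$ by $\jsm(\vtheta; g(\cdot)^2) + \KL(p_T\mathrel\| \pi)$, and then invoke the form \cref{eqn:entropy} of \cref{thm:entropy} to write $\mcal{H}(p) = \mcal{H}(p_T) - \tfrac{1}{2}\int_0^T \mbb{E}_{p_t(\rvx)}[2\nabla\!\cdot\!\vf(\rvx,t) + g(t)^2\norm{\nabla_\rvx\log p_t(\rvx)}_2^2]\ud t$. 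Adding these two and using the elementary identity $\KL(p_T\mathrel\| \pi) + \mcal{H}(p_T) = -\mbb{E}_{p_T(\rvx)}[\log \pi(\rvx)]$ collapses the boundary terms into a single cross-entropy with $\pi$.

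Next I would massage the two remaining integrals into the form of \cref{eqn:elbo1}. Expanding $\jsm(\vtheta; g(\cdot)^2) = \tfrac{1}{2}\int_0^T g(t)^2 \mbb{E}_{p_t}[\norm{\nabla_\rvx\log p_t - \vs_\vtheta}_2^2]\ud t$ and cancelling the $\norm{\nabla_\rvx\log p_t}_2^2$ terms against the corresponding contribution from the entropy integral leaves $\tfrac{1}{2}\int_0^T g(t)^2 \mbb{E}_{p_t}[\norm{\vs_\vtheta}_2^2 - 2\,\vs_\vtheta\cdot\nabla_\rvx\log p_t]\ud t$. Applying Hyvärinen's integration-by-parts trick $\mbb{E}_{p_t}[\vs_\vtheta\cdot\nabla_\rvx\log p_t] = -\mbb{E}_{p_t}[\nabla\!\cdot\!\vs_\vtheta]$, justified by the tail-decay assumption (xii) so that boundary terms vanish, converts $-2\,\vs_\vtheta\cdot\nabla_\rvx\log p_t$ into $2\nabla\!\cdot\!\vs_\vtheta$, producing exactly \cref{eqn:elbo1}.

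For \cref{eqn:elbo2} I would follow Vincent's denoising score matching argument. Writing $p_t(\rvx') = \int p(\rvx)\, p_{0t}(\rvx'\mid\rvx)\ud\rvx$ and differentiating under the integral sign yields the identity $\mbb{E}_{p_t(\rvx')}[\vs_\vtheta(\rvx',t)\cdot\nabla_{\rvx'}\log p_t(\rvx')] = \mbb{E}_{p(\rvx)p_{0t}(\rvx'\mid\rvx)}[\vs_\vtheta(\rvx',t)\cdot\nabla_{\rvx'}\log p_{0t}(\rvx'\mid\rvx)]$, while $\mbb{E}_{p_t}[\norm{\vs_\vtheta}_2^2] = \mbb{E}_{pp_{0t}}[\norm{\vs_\vtheta}_2^2]$ trivially. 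Completing the square to produce $\norm{\vs_\vtheta - \nabla_{\rvx'}\log p_{0t}}_2^2 - \norm{\nabla_{\rvx'}\log p_{0t}}_2^2$ and rewriting the $\nabla\!\cdot\!\vf$ expectation over $p_t$ as an equivalent expectation over $p(\rvx)p_{0t}(\rvx'\mid\rvx)$ (again by marginalisation) gives \cref{eqn:elbo2}.

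The main obstacle is not any single step but the bookkeeping around integration by parts and swapping differentiation with integration, both of which rely on the regularity conditions listed in the appendix, in particular the polynomial-tail bound (xii) on $p_t$ and the linear-growth/Lipschitz conditions on $\vf$ and $\vs_\vtheta$. Once those tail conditions are invoked to discard boundary contributions, the remaining manipulations are algebraic and the two equivalent forms follow directly.
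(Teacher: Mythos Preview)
Your proposal is correct and follows essentially the same route as the paper: decompose $-\E_{p}[\log \psde]$ as $\KL(p\mathrel\|\psde)+\mcal{H}(p)$, bound the KL term via \cref{thm:bound}, substitute the entropy identity \cref{eqn:entropy}, then use integration by parts (assumption (xii)) to reach \cref{eqn:elbo1} and Vincent's denoising identity to reach \cref{eqn:elbo2}. The paper's proof carries out exactly these steps in the same order.
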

\begin{proof}
Since $-\E_{p(\rvx)}[\log \psde(\rvx)] = \KL(p\mathrel\| \psde) + \mcal{H}(p)$, we can combine \cref{thm:bound} and \cref{thm:entropy} to obtain
\begin{align}
-\E_{p(\rvx)}[\log \psde(\rvx)] &\leq \frac{1}{2}\int_0^T \E_{p_t(\rvx)}[g(t)^2\norm{\nabla_\rvx \log p_t(\rvx) - \vs_\vtheta(\rvx, t)}_2^2]\ud t + \KL(p_T\mathrel\| \pi) \notag \\
 & \quad + \mcal{H}(p_T(\rvx)) - \frac{1}{2}\int_0^T \E_{p_t(\rvx)}[2 \nabla\cdot \vf(\rvx, t) + g(t)^2 \norm{\nabla_\rvx \log p_t(\rvx)}_2^2]\ud t \notag \\
 &= -\E_{p_T(\rvx)}[\log \pi(\rvx)] \notag \\
 &\quad + \frac{1}{2}\int_0^T \E_{p_t(\rvx)}[g(t)^2\norm{\nabla_\rvx \log p_t(\rvx) - \vs_\vtheta(\rvx, t)}_2^2 - g(t)^2 \norm{\nabla_\rvx \log p_t(\rvx)}_2^2]\ud t \notag \\
 &\quad - \int_0^T \E_{p_t(\rvx)}[\nabla \cdot \vf(\rvx, t)]\ud t.\label{eqn:combine}
\end{align}
The second term of \cref{eqn:combine} can be simplified via integration by parts
\begin{align}
    & \frac{1}{2}\int_0^T \E_{p_t(\rvx)}[g(t)^2\norm{\nabla_\rvx \log p_t(\rvx) - \vs_\vtheta(\rvx, t)}_2^2 - g(t)^2 \norm{\nabla_\rvx \log p_t(\rvx)}_2^2]\ud t \notag \\
    =& \frac{1}{2}\int_0^T \E_{p_t(\rvx)}[g(t)^2 \norm{\vs_\vtheta(\rvx, t)}_2^2 - 2 g(t)^2 \vs_\vtheta(\rvx,t)\tran \nabla_\rvx \log p_t(\rvx)] \ud t \notag \\
    =& \frac{1}{2}\int_0^T \E_{p_t(\rvx)}[g(t)^2 \norm{\vs_\vtheta(\rvx, t)}_2^2]\ud t
    - \int_0^T \E_{p_t(\rvx)}[g(t)^2 \vs_\vtheta(\rvx,t)\tran \nabla_\rvx \log p_t(\rvx)] \ud t \notag \\
    =& \frac{1}{2}\int_0^T \E_{p_t(\rvx)}[g(t)^2 \norm{\vs_\vtheta(\rvx, t)}_2^2]\ud t
    - \int_0^T g(t)^2 \int p_t(\rvx) \vs_\vtheta(\rvx,t)\tran \nabla_\rvx \log p_t(\rvx) \ud \rvx \ud t \notag \\
    =& \frac{1}{2}\int_0^T \E_{p_t(\rvx)}[g(t)^2 \norm{\vs_\vtheta(\rvx, t)}_2^2]\ud t
    - \int_0^T g(t)^2 \int \vs_\vtheta(\rvx,t)\tran \nabla_\rvx p_t(\rvx) \ud \rvx \ud t \notag \\
    \stackrel{(i)}{=}& \frac{1}{2}\int_0^T \E_{p_t(\rvx)}[g(t)^2 \norm{\vs_\vtheta(\rvx, t)}_2^2]\ud t
    + \int_0^T g(t)^2 \int p_t(\rvx) \nabla \cdot \vs_\vtheta(\rvx, t) \ud \rvx \ud t \notag \\
    =& \frac{1}{2}\int_0^T \E_{p_t(\rvx)}[g(t)^2 \norm{\vs_\vtheta(\rvx, t)}_2^2 + 2g(t)^2 \nabla \cdot \vs_\vtheta(\rvx, t)]\ud t, \label{eqn:partial_integral}
\end{align}
where (i) is due to integration by parts and the limiting behavior of $p_t(\rvx)$ given by assumption (xii). Combining \cref{eqn:partial_integral} and \cref{eqn:combine} completes the proof for \cref{eqn:elbo1}.

The proof for \cref{eqn:elbo2} parallels that of denoising score matching~\cite{vincent2011connection}. Observe that $p_t(\rvx) = \int p(\rvx') p_{0t}(\rvx \mid \rvx') \ud \rvx'$. As a result,
\begin{align}
    &\int_0^T \E_{p_t(\rvx)}[g(t)^2 \vs_\vtheta(\rvx, t)\tran \nabla_\rvx \log p_t(\rvx)]\ud t = \int_0^T g(t)^2 \int \vs_\vtheta(\rvx, t)\tran \nabla_\rvx p_t(\rvx) \ud \rvx \ud t \notag \\
    =& \int_0^T g(t)^2 \int \vs_\vtheta(\rvx, t)\tran \nabla_\rvx \int p(\rvx') p_{0t}(\rvx \mid \rvx') \ud \rvx' \ud \rvx \ud t \notag \\
    =& \int_0^T g(t)^2 \int \vs_\vtheta(\rvx, t)\tran \int p(\rvx') \nabla_\rvx p_{0t}(\rvx \mid \rvx') \ud \rvx' \ud \rvx \ud t \notag\\
    =& \int_0^T g(t)^2 \int \vs_\vtheta(\rvx, t)\tran \int p(\rvx') p_{0t}(\rvx \mid \rvx')\nabla_\rvx \log p_{0t}(\rvx \mid \rvx') \ud \rvx' \ud \rvx \ud t \notag \\
    =& \int_0^T \E_{p(\rvx)p_{0t}(\rvx' \mid \rvx)}[g(t)^2  \vs_\vtheta(\rvx', t)\tran \nabla_{\rvx'} \log p_{0t}(\rvx' \mid \rvx)] \ud t. \label{eqn:denoising}
\end{align}
Substituting \cref{eqn:denoising} into the second term of \cref{eqn:combine}, we have
\begin{align}
    & \frac{1}{2}\int_0^T \E_{p_t(\rvx)}[g(t)^2\norm{\nabla_\rvx \log p_t(\rvx) - \vs_\vtheta(\rvx, t)}_2^2 - g(t)^2 \norm{\nabla_\rvx \log p_t(\rvx)}_2^2]\ud t \notag \\
    =& \frac{1}{2}\int_0^T \E_{p_t(\rvx)}[g(t)^2 \norm{\vs_\vtheta(\rvx, t)}_2^2 - 2 g(t)^2 \vs_\vtheta(\rvx,t)\tran \nabla_\rvx \log p_t(\rvx)] \ud t \notag \\
    =& \frac{1}{2}\int_0^T \E_{p_t(\rvx)}[g(t)^2 \norm{\vs_\vtheta(\rvx, t)}_2^2 - 2 g(t)^2 \vs_\vtheta(\rvx,t)\tran \nabla_\rvx \log p_t(\rvx)] \ud t \notag \\
    =& \frac{1}{2}\int_0^T \E_{p(\rvx)p_{0t}(\rvx'\mid \rvx)}[g(t)^2 \norm{\vs_\vtheta(\rvx', t)}_2^2 - 2 g(t)^2 \vs_\vtheta(\rvx',t)\tran \nabla_{\rvx'} \log p_{0t}(\rvx' \mid \rvx)] \ud t \notag \\
    =& \frac{1}{2}\int_0^T \E_{p(\rvx)p_{0t}(\rvx'\mid \rvx)}[g(t)^2\norm{\vs_\vtheta(\rvx', t) - \nabla_{\rvx'}\log p_{0t}(\rvx' \mid \rvx)}_2^2 - g(t)^2 \norm{\nabla_{\rvx'}\log p_{0t}(\rvx' \mid \rvx)}_2^2] \ud t. \label{eqn:denoising_2}
\end{align}
We can now complete the proof for \cref{eqn:elbo2} by combining \cref{eqn:denoising_2} an \cref{eqn:combine}.
\end{proof}

\pointelbo*
\begin{proof}
The result in \cref{thm:elbo_bound} can be re-written as
\begin{align*}
-\mbb{E}_{p(\rvx)}[\log \psde(\rvx)] 
\leq & -\mbb{E}_{p(\rvx)p_{0T}(\rvx' \mid \rvx)}[\log \pi(\rvx')] 
+ \frac{1}{2}\int_{0}^T \mbb{E}_{p(\rvx)p_{0t}(\rvx'\mid \rvx)}[2g(t)^2 \nabla \cdot \vs_\vtheta(\rvx', t) \notag \\
&+ g(t)^2 \norm{\vs_\vtheta(\rvx', t)}_2^2 -2 \nabla \cdot \vf(\rvx', t)] \ud t.\\
=&  -\mbb{E}_{p(\rvx)p_{0T}(\rvx' \mid \rvx)}[\log \pi(\rvx')]  \notag \\
&+ \frac{1}{2}\int_{0}^T \mbb{E}_{p(\rvx)p_{0t}(\rvx' \mid \rvx)}[g(t)^2 \norm{\vs_\vtheta(\rvx', t) - \nabla_{\rvx'} \log p_{0t}(\rvx' \mid \rvx)}_2^2 \notag\\
&- g(t)^2\norm{\nabla_{\rvx'} \log p_{0t}(\rvx' \mid \rvx)}_2^2 -2 \nabla \cdot \vf(\rvx', t)] \ud t.
\end{align*}
Given a fixed SDE (and its transition kernel $p_{0t}(\rvx' \mid \rvx)$), \cref{thm:elbo_bound} holds for any data distribution $p$ that satisfies our assumptions. Leveraging proof by contradiction, we can easily see that $\E_{p(\rvx)}$ in both sides of \cref{eqn:elbo1,eqn:elbo2} can be cancelled to get 
\begin{align*}
    -\log \psde(\rvx) \leq \boundsm(\rvx) = \bounddsm(\rvx),
\end{align*}
which finishes the proof.
\end{proof}

\section{Numerical stability}\label{app:stability}
In our previous theoretical discussion, we always assume that data are perturbed with an SDE starting from $t=0$. However, in practical implementations, $t=0$ often leads to numerical instability. As a pragmatic solution, we choose a small non-zero starting time $\epsilon > 0$, and consider the SDE in the time horizon $[\epsilon, T]$. Using the same proof techniques, we can easily see that when the time horizon is $[\epsilon, T]$ instead of $[0, T]$, the original bound in \cref{thm:bound},
\begin{align*}
    \KL(p\mathrel\| \psde) &\leq \jsm(\vtheta; g(\cdot)^2) + \KL(p_T\mathrel\| \pi)\\
    &= \frac{1}{2} \int_{0}^T \E_{p_t(\rvx)}[g(t)^2\norm{\nabla_\rvx \log p_t(\rvx) - \vs_\vtheta(\rvx, t)}_2^2] \ud t  + \KL(p_T\mathrel\| \pi)
\end{align*}
shall be replaced with
\begin{align}
    \KL(\tilde{p}\mathrel\| \ptsde) \leq \frac{1}{2} \int_{\epsilon}^T \E_{p_t(\rvx)}[g(t)^2\norm{\nabla_\rvx \log p_t(\rvx) - \vs_\vtheta(\rvx, t)}_2^2] \ud t  + \KL(p_T\mathrel\| \pi) \label{eqn:kl_bound_tilde}
\end{align}
where $\tilde{p}(\rvx) \coloneqq \int p(\tilde{\rvx}) p_{0\epsilon}(\rvx \mid \tilde{\rvx}) \ud \rvx$, and $\ptsde$ denotes the marginal distribution of $\hat{\rvx}_\vtheta(\epsilon)$. Here the stochastic process $\{ \hat{\rvx}_\vtheta(t) \}_{t\in[0,T]}$ is defined according to \cref{eqn:reverse_sde_model}. When $\epsilon$ is sufficiently small, we always have
\begin{align*}
    \KL(\tilde{p}\mathrel\| \ptsde) \approx \KL(p\mathrel\| \psde),
\end{align*}
so we train with \cref{eqn:kl_bound_tilde} to approximately maximize the model likelihood for $\psde$. However, at test time, we should report the likelihood bound for $\psde$ for mathematical rigor, not $\ptsde$. To this end, we first derive an analogous result to \cref{thm:elbo_bound2} with the time horizon $[\epsilon, T]$, given as below.
\begin{theorem}\label{thm:bound_epsilon}
Let $p_{0t}(\rvx' \mid \rvx)$ denote the transition distribution from $p_0(\rvx)$ to $p_t(\rvx)$ for the SDE in \cref{eq:sde}. With the same notations and conditions in \cref{thm:elbo_bound2}, as well as the definitions of $\tilde{p}$ and $\ptsde$ given above, we have
\begin{align}
-\E_{p_{0\epsilon}(\rvx' \mid \rvx)}[\log \ptsde(\rvx')] \leq \boundsm(\rvx, \epsilon) = \bounddsm(\rvx, \epsilon),
\end{align}
where $\boundsm(\rvx, \epsilon)$ is defined as
\begin{align*}
\resizebox{\linewidth}{!}{$\displaystyle 
-\mbb{E}_{p_{0T}(\rvx' \mid \rvx)}[\log \pi(\rvx')] 
+ \frac{1}{2}\int_{\epsilon}^T \mbb{E}_{p_{0t}(\rvx' \mid \rvx)}\left[2g(t)^2 \nabla_{\rvx'} \cdot \vs_\vtheta(\rvx', t)
+ g(t)^2 \norm{\vs_\vtheta(\rvx', t)}_2^2 -2 \nabla_{\rvx'} \cdot \vf(\rvx', t)\right] \ud t$},
\end{align*}
and $\bounddsm(\rvx, \epsilon)$ is given by
\begin{small}
\begin{multline*}
 -\mbb{E}_{p_{0T}(\rvx' \mid \rvx)}[\log \pi(\rvx')] 
+ \frac{1}{2}\int_{\epsilon}^T \mbb{E}_{p_{0t}(\rvx' \mid \rvx)}\left[g(t)^2 \norm{\vs_\vtheta(\rvx', t) - \nabla_{\rvx'} \log p_{0t}(\rvx' \mid \rvx)}_2^2\right] \ud t\\
- \frac{1}{2}\int_{\epsilon}^T \mbb{E}_{p_{0t}(\rvx' \mid \rvx)}\left[g(t)^2\norm{\nabla_{\rvx'} \log p_{0t}(\rvx' \mid \rvx)}_2^2 + 2 \nabla_{\rvx'} \cdot \vf(\rvx', t)\right] \ud t.
\end{multline*}
\end{small}
\end{theorem}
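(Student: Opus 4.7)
The plan is to replay the three-step argument that produced \cref{thm:elbo_bound2}, but with the time axis truncated to $[\epsilon, T]$ at the expense of replacing $p$ and $\psde$ at the lower endpoint by their slightly smoothed counterparts $\tilde{p}$ and $\ptsde$. The starting point is already handed to us by \cref{eqn:kl_bound_tilde}, which is the $[\epsilon,T]$-analogue of \cref{thm:bound}: its proof is identical to that of \cref{thm:bound}, since the Girsanov/Fokker--Planck/data-processing chain is insensitive to a shift of the initial time (one just restarts the path-measure comparison at $\rvx(\epsilon)$).

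First, I would derive a truncated entropy identity mirroring \cref{thm:entropy}: starting from $\mcal{H}(\tilde{p}) - \mcal{H}(p_T) = \int_T^{\epsilon} \partial_t \mcal{H}(p_t)\,\ud t$ and substituting the Fokker--Planck equation for $\partial_t p_t$, the same integration-by-parts computation (using assumption (xii) for the boundary terms) gives
\begin{align*}
\mcal{H}(\tilde{p}) = \mcal{H}(p_T) - \tfrac{1}{2}\int_{\epsilon}^{T}\E_{p_t(\rvx)}\!\left[2\nabla\!\cdot\!\vf(\rvx,t) + g(t)^2\norm{\nabla_\rvx\log p_t(\rvx)}_2^2\right]\ud t.
\end{align*}
Adding this to the KL bound \eqref{eqn:kl_bound_tilde} via the identity $-\E_{\tilde{p}(\rvx')}[\log\ptsde(\rvx')] = \KL(\tilde{p}\mathrel\|\ptsde) + \mcal{H}(\tilde{p})$, and combining $\KL(p_T\mathrel\|\pi) + \mcal{H}(p_T) = -\E_{p_T(\rvx)}[\log\pi(\rvx)]$, yields a bound on $-\E_{\tilde{p}(\rvx')}[\log\ptsde(\rvx')]$ written entirely in terms of integrals over $[\epsilon,T]$.

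Second, I would apply the same integration-by-parts manipulation used to go from \eqref{eqn:combine} to \eqref{eqn:partial_integral} in the proof of \cref{thm:elbo_bound}, turning the combination $g(t)^2\norm{\nabla_\rvx\log p_t - \vs_\vtheta}_2^2 - g(t)^2\norm{\nabla_\rvx\log p_t}_2^2$ into $g(t)^2\norm{\vs_\vtheta}_2^2 + 2g(t)^2\nabla\!\cdot\!\vs_\vtheta$; this step again uses only the tail decay assumption (xii) and is insensitive to the truncation. At this stage, every remaining expectation is either $\E_{p_t(\rvx')}[\cdot] = \E_{p(\rvx)p_{0t}(\rvx'\mid\rvx)}[\cdot]$ (for $t\in[\epsilon,T]$) or $\E_{p_T(\rvx')} = \E_{p(\rvx)p_{0T}(\rvx'\mid\rvx)}$, and the left-hand side is $\E_{p(\rvx)p_{0\epsilon}(\rvx'\mid\rvx)}[-\log\ptsde(\rvx')]$. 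Exactly as in the proof of \cref{thm:elbo_bound2}, since the resulting inequality holds for every data distribution $p$ satisfying the regularity assumptions, I can strip off the outer $\E_{p(\rvx)}$ by a standard contradiction argument to obtain the per-point bound $-\E_{p_{0\epsilon}(\rvx'\mid\rvx)}[\log\ptsde(\rvx')]\leq \boundsm(\rvx,\epsilon)$.

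Third, the equality $\boundsm(\rvx,\epsilon) = \bounddsm(\rvx,\epsilon)$ is a verbatim repeat of the denoising-score-matching trick \eqref{eqn:denoising}–\eqref{eqn:denoising_2}: expand $p_t(\rvx') = \int p(\rvx) p_{0t}(\rvx'\mid\rvx)\,\ud\rvx$, push the gradient through the integral, and replace $\nabla_{\rvx'}\log p_t(\rvx')$ inside the inner product by $\nabla_{\rvx'}\log p_{0t}(\rvx'\mid\rvx)$ under the joint expectation; the per-point cancellation of $\E_{p(\rvx)}$ goes through identically because the identity already holds before taking any outer expectation. The only place where genuine care is needed is in checking that the entropy identity and the integration-by-parts step remain valid at the truncated endpoint $t=\epsilon$ — but this is automatic since $p_\epsilon$ inherits the smoothness and decay of $p$ through the Gaussian transition kernel, so no new assumption is required; this is the mildly delicate part of the argument, and everything else is a bookkeeping reduction to the already-proved statements of \cref{thm:bound,thm:entropy,thm:elbo_bound,thm:elbo_bound2}.
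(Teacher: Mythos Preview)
Your proposal is correct and follows essentially the same approach as the paper, which simply states that ``the proof closely parallels that of \cref{thm:elbo_bound2}, by noting that $\tilde{p}(\rvx) = \int p(\rvx')p_{0\epsilon}(\rvx \mid \rvx') \ud \rvx'$.'' You have faithfully unpacked what that parallel means: replay the KL bound, the entropy identity, the integration-by-parts simplification, and the per-point stripping of $\E_{p(\rvx)}$, all on the truncated horizon $[\epsilon,T]$ with $\tilde p = p_\epsilon$ playing the role of the initial distribution.
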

\begin{proof}
The proof closely parallels that of \cref{thm:elbo_bound2}, by noting that $\tilde{p}(\rvx) = \int p(\rvx')p_{0\epsilon}(\rvx \mid \rvx') \ud \rvx'$.
\end{proof}

Although $\ptsde$ is a probabilistic model for $\tilde{p}$, we can transform it into a probabilistic model for $p$ leveraging a denoising distribution $q_\vtheta(\rvx \mid \rvx')$ that approximately converts $\tilde{p}$ to $p$. Suppose $p_{0\epsilon}(\rvx'\mid \rvx) = \mcal{N}(\rvx' \mid \alpha \rvx, \beta^2 \bm{I})$. Inspired by Tweedie's formula, we choose 
\begin{align*}
    q_\vtheta(\rvx \mid \rvx') \coloneqq \mcal{N}\bigg(\rvx \mathrel\bigg| \frac{\rvx'}{\alpha} + \frac{\beta^2}{\alpha} \vs_\vtheta(\rvx', \epsilon), \frac{\beta^2}{\alpha^2}\bm{I}\bigg),
\end{align*}
and define $p_\vtheta(\rvx) \coloneqq \int q_\vtheta(\rvx \mid \rvx') \ptsde(\rvx') \ud \rvx'$, which is a probabilistic model for $p$. With slight abuse of notation, we identify $p_\vtheta$ with $\psde$ in \cref{tab:results}. With Jensen's inequality, we have
\begin{align*}
    -\log p_\vtheta(\rvx) \leq -\E_{p_{0\epsilon}(\rvx' \mid \rvx)}\bigg[ \log \frac{q_\vtheta(\rvx \mid \rvx') \ptsde(\rvx')}{p_{0\epsilon}(\rvx' \mid \rvx)} \bigg].
\end{align*}
Combined with \cref{thm:bound_epsilon}, we have
\begin{align}
    -\log p_\vtheta(\rvx) &\leq -\E_{p_{0\epsilon}(\rvx' \mid \rvx)}[\log q_\vtheta(\rvx \mid \rvx') - \log p_{0\epsilon}(\rvx' \mid \rvx)] + \boundsm(\rvx, \epsilon)\\
    &= -\E_{p_{0\epsilon}(\rvx' \mid \rvx)}[\log q_\vtheta(\rvx \mid \rvx') - \log p_{0\epsilon}(\rvx' \mid \rvx)] + \bounddsm(\rvx, \epsilon) \label{eqn:correction}
\end{align}
The above bound \cref{eqn:correction} was applied to both computing the test-time likelihood bounds in \cref{tab:results}, and training the flow model used in variational dequantization. Note that it was not used to train the time-dependent score-based model.

In practice, we choose $\epsilon = 10^{-5}$ for VP SDEs and $\epsilon = 10^{-2}$ for subVP SDEs, except that on ImageNet we use $\epsilon = 5 \times 10^{-5}$ for VP SDE models trained with likelihood weighting and importance sampling. Note that \cite{song2020score} chooses $\epsilon = 10^{-5}$ for all cases. We found that when using likelihood weighting and optionally importance sampling, $\epsilon=10^{-5}$ for subVP SDEs can cause stiffness for numerical ODE solvers. In contrast, using $\epsilon=10^{-2}$ for subVP SDEs sidesteps numerical issues without hurting the performance for score-based models trained with original weightings in \cite{song2020score}. For the bound values in \cref{tab:results}, we draw 1000 time values uniformly in $[\epsilon, T]$ and use them to estimate $\bounddsm$ for each datapoint, with the same importance sampling technique in \cref{eqn:importance_sampling}. We use the correction in \cref{eqn:correction} and report upper bounds for $-\log p_\vtheta(\rvx)$. For computing the likelihood of $\pode$, we use the Dormand-Prince RK45 ODE solver~\cite{dormand1980family} with absolute and relevant tolerances set to $10^{-5}$. We do not use the correction in \cref{eqn:correction} for $\pode$, because it is still a valid likelihood for the data distribution even in the time horizon $[\epsilon, T]$.

Below is a related result to bound $\log \ptsde(\rvx)$ directly. We include it here for completeness, though we do not use it for either training or inference in our experiments.
\begin{theorem}\label{thm:bound_epsilon_2}
Let $p_{0t}(\rvx' \mid \rvx)$ denote the transition distribution from $p_0(\rvx)$ to $p_t(\rvx)$ for the SDE in \cref{eq:sde}. With the same notations and conditions in \cref{thm:elbo_bound2}, as well as the definitions of $\tilde{p}$ and $\ptsde$ in \cref{thm:bound_epsilon}, we have
\begin{align}
-\log \ptsde(\rvx) \leq \boundsme(\rvx) = \bounddsme(\rvx),
\end{align}
where $\boundsme(\rvx)$ is defined as
\begin{align*}
\resizebox{\linewidth}{!}{$\displaystyle 
-\mbb{E}_{p_{\epsilon T}(\rvx' \mid \rvx)}[\log \pi(\rvx')] 
+ \frac{1}{2}\int_{\epsilon}^T \mbb{E}_{p_{\epsilon t}(\rvx' \mid \rvx)}\left[2g(t)^2 \nabla_{\rvx'} \cdot \vs_\vtheta(\rvx', t)
+ g(t)^2 \norm{\vs_\vtheta(\rvx', t)}_2^2 -2 \nabla_{\rvx'} \cdot \vf(\rvx', t)\right] \ud t$},
\end{align*}
and $\bounddsme(\rvx)$ is given by
\begin{small}
\begin{multline*}
 -\mbb{E}_{p_{\epsilon T}(\rvx' \mid \rvx)}[\log \pi(\rvx')] 
+ \frac{1}{2}\int_{\epsilon}^T \mbb{E}_{p_{\epsilon t}(\rvx' \mid \rvx)}\left[g(t)^2 \norm{\vs_\vtheta(\rvx', t) - \nabla_{\rvx'} \log p_{\epsilon t}(\rvx' \mid \rvx)}_2^2\right] \ud t\\
- \frac{1}{2}\int_{\epsilon}^T \mbb{E}_{p_{\epsilon t}(\rvx' \mid \rvx)}\left[g(t)^2\norm{\nabla_{\rvx'} \log p_{\epsilon t}(\rvx' \mid \rvx)}_2^2 + 2 \nabla_{\rvx'} \cdot \vf(\rvx', t)\right] \ud t.
\end{multline*}
\end{small}
\end{theorem}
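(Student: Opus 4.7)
The plan is to mirror the proof of \cref{thm:elbo_bound2} verbatim, but with the forward SDE interpreted as starting at time $\epsilon$ with initial distribution $\tilde{p} = p_\epsilon$, rather than at time $0$ with initial distribution $p$. Concretely, I would first establish an analogue of \cref{thm:bound} restricted to the time horizon $[\epsilon, T]$, namely
\begin{align*}
\KL(\tilde{p} \mathrel\| \ptsde) \leq \frac{1}{2}\int_\epsilon^T \E_{p_t(\rvx)}\left[g(t)^2 \norm{\nabla_\rvx \log p_t(\rvx) - \vs_\vtheta(\rvx,t)}_2^2\right] \ud t + \KL(p_T \mathrel\| \pi),
\end{align*}
which follows by the same chain rule + Girsanov argument applied to the path measures of \cref{eq:sde} and \cref{eqn:reverse_sde_model} on $[\epsilon, T]$ (data processing yields the KL on the time-$\epsilon$ marginals, which are exactly $\tilde{p}$ and $\ptsde$).

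Next, I would invoke an entropy identity analogous to \cref{thm:entropy}, but over $[\epsilon, T]$, obtaining
\begin{align*}
\mcal{H}(\tilde{p}) = \mcal{H}(p_T) - \frac{1}{2}\int_\epsilon^T \E_{p_t(\rvx)}\left[2\nabla\cdot \vf(\rvx,t) + g(t)^2\norm{\nabla_\rvx \log p_t(\rvx)}_2^2\right] \ud t,
\end{align*}
which follows from the Fokker--Planck equation exactly as in the original proof (the restriction to $[\epsilon, T]$ changes only the integration limits). Adding these two estimates gives a bound on $-\E_{\tilde{p}(\rvx)}[\log \ptsde(\rvx)]$ in the same integration-by-parts form as \cref{eqn:elbo1}, except that expectations are now over $p_t$ with $t\in[\epsilon,T]$.

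To obtain the pointwise bound, I would rewrite the expectation $\E_{p_t(\rvx')}[\cdot]$ for $t \in [\epsilon, T]$ as $\E_{\tilde{p}(\rvx)}\E_{p_{\epsilon t}(\rvx'\mid\rvx)}[\cdot]$ (since the diffusion between time $\epsilon$ and time $t$ is governed by $p_{\epsilon t}$), and then apply the same proof-by-contradiction argument as in \cref{thm:elbo_bound2}: because the inequality holds for arbitrary $\tilde{p}$ satisfying the assumptions, the outer expectation over $\tilde{p}$ can be stripped, yielding the desired pointwise bound $-\log \ptsde(\rvx) \leq \boundsme(\rvx)$. The equivalence with the denoising form $\bounddsme(\rvx)$ is obtained via the same integration-by-parts / denoising-score-matching identity used to pass from \cref{eqn:elbo1} to \cref{eqn:elbo2}, now with $p_{\epsilon t}$ in place of $p_{0t}$.

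The main obstacle is the bookkeeping in the expectation-stripping step: one must verify that $p_t(\rvx') = \int \tilde{p}(\rvx) p_{\epsilon t}(\rvx' \mid \rvx)\, \ud \rvx$ so that the integrands in $\boundsme$ and $\bounddsme$ are meaningful for an individual $\rvx$, and that all assumptions in \cref{app:proof} transfer to the shifted problem (in particular the regularity of $p_t$ and the Novikov condition on $[\epsilon,T]$, both of which are immediate since $[\epsilon,T] \subset [0,T]$). Everything else is a direct transcription of the proofs of \cref{thm:bound,thm:entropy,thm:elbo_bound,thm:elbo_bound2} with the lower limit of integration changed from $0$ to $\epsilon$ and $p$ replaced by $\tilde{p}$.
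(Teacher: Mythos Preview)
Your proposal is correct and matches the paper's own approach: the paper's proof is a one-line remark that the argument ``closely parallels those of \cref{thm:elbo_bound2,thm:bound_epsilon},'' which is precisely the time-shifted transcription you outline (replace $[0,T]$ by $[\epsilon,T]$, $p$ by $\tilde p$, and $p_{0t}$ by $p_{\epsilon t}$, then repeat the KL bound, entropy identity, combination, and expectation-stripping steps). Your observations about $p_t(\rvx') = \int \tilde p(\rvx)\, p_{\epsilon t}(\rvx'\mid \rvx)\, \ud \rvx$ and the assumptions carrying over on $[\epsilon,T]\subset[0,T]$ are exactly the bookkeeping needed to make the parallel rigorous.
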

\begin{proof}
Proof closely parallels those of \cref{thm:elbo_bound2,thm:bound_epsilon}.
\end{proof}

\section{Experimental details}\label{app:exp}
\paragraph{Datasets} All our experiments are performed on two image datasets: CIFAR-10~\cite{krizhevsky2014cifar} and down-sampled ImageNet~\cite{van2016pixel}. Both contain images of resolution $32\times 32$. CIFAR-10 has 50000 images as the training set and 10000 images as the test set. Down-sampled ImageNet has 1281149 training images and 49999 test images. It is well-known that ImageNet contains some personal sensitive information and may cause privacy concern~\cite{yang2021study}. We minimize this risk by using the dataset with a small resolution ($32\times 32$).

\paragraph{Model architectures}
Our variational dequantization model, $q_\phi(\rvu \mid \rvx)$, follows the same architecture of Flow++~\cite{ho2019flow++}. We do not use dropout for score-based models trained on ImageNet. We did not tune model architectures or training hyper-parameters specifically for maximizing likelihoods. All likelihood values were reported using the last checkpoint of each setting.

\paragraph{Training} We follow the same training procedure for score-based models in \cite{song2020score}. We also use the same hyperparameters for training the variational dequantization model, except that we train it for only 300000 iterations while fixing the score-based model. %
All models are trained on Cloud TPU v3-8 (roughly equivalent to 4 Tesla V100 GPUs). The baseline DDPM++ model requires around 33 hours to finish training, while the deep DDPM++ model requires around 44 hours. The variational dequantization model for the former requires around 7 hours to train, and for the latter it requires around 9.5 hours.

\paragraph{Confidence intervals}
All likelihood values are obtained by averaging the results on around 50000 datapoints, sampled with replacement from the test dataset. We can compute the confidence intervals with Student's t-test. On CIFAR-10, the radius of 95\% confidence intervals is typically around 0.006 bits/dim, while on ImageNet it is around 0.008 bits/dim. %

\paragraph{Sample quality}
All FID values are computed on 50000 samples from $\pode$, generated with numerical ODE solvers as in \cite{song2020score}. We compute FIDs between samples and training/test data for CIFAR-10/ImageNet. Although likelihood weighting + importance sampling slightly increases FID scores, their samples have comparable visual quality, as demonstrated in \cref{fig:cifar,fig:imagenet}.

\begin{figure}
     \centering
     \begin{subfigure}{\textwidth}
         \centering
         \includegraphics[width=\textwidth]{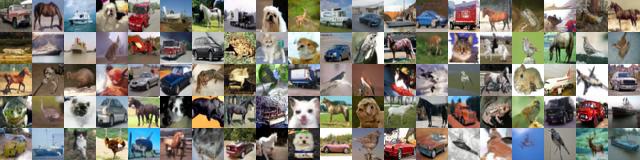}
         \caption{DDPM++ (deep, subVP)~\protect\cite{song2020score}, FID = 2.86}
     \end{subfigure}
     \begin{subfigure}{\textwidth}
         \centering
         \includegraphics[width=\textwidth]{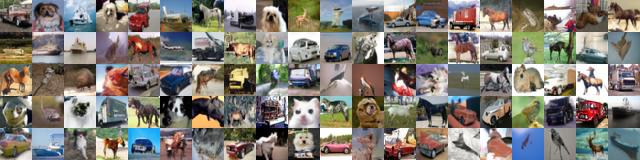}
         \caption{ScoreFlow, FID = 5.34}
     \end{subfigure}
     \caption{Samples on CIFAR-10. (a) Model with the best FID. (b) ScoreFlow trained with likelihood weighting + importance sampling + VP SDE. Samples of both models are generated with the same random seed.}\label{fig:cifar}
\end{figure}

\begin{figure}
     \centering
     \begin{subfigure}{\textwidth}
         \centering
         \includegraphics[width=\textwidth]{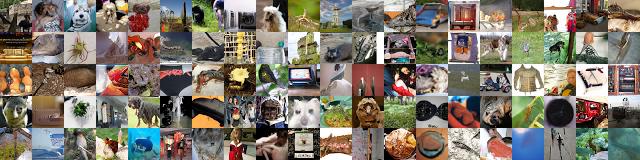}
         \caption{DDPM++ (VP)~\protect\cite{song2020score}, FID = 8.34}
     \end{subfigure}
     \begin{subfigure}{\textwidth}
         \centering
         \includegraphics[width=\textwidth]{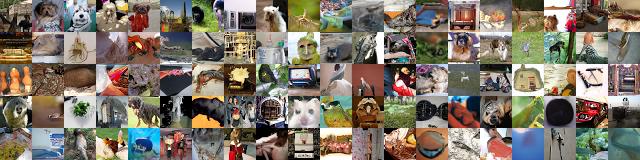}
         \caption{ScoreFlow, FID = 10.18}
     \end{subfigure}
     \caption{Samples on ImageNet $32\times 32$. (a) Model with the best FID. (b) ScoreFlow trained with likelihood weighting + importance sampling + VP SDE. Samples of both models are generated with the same random seed.}\label{fig:imagenet}
\end{figure}

\end{document}